\newcommand*{\circled}[1]{\lower.7ex\hbox{\tikz\draw (0pt, 0pt)%
    circle (.5em) node {\makebox[1em][c]{\small #1}};}}
\newtheorem{theorem}{Theorem}
\newtheorem{definition}{Definition}
\newtheorem{lemma}{Lemma}
\begin{document}

%\title{Out-of-Distribution Knowledge Distillation via Confidence Amendment}
\title{Distilling the Unknown to Unveil Certainty}

%~\IEEEmembership{Member,~IEEE,}
\author{Zhilin Zhao,
        Longbing Cao,
        Yixuan Zhang,
        Kun-Yu Lin,
        and~Wei-Shi Zheng% <-this % stops a space
\IEEEcompsocitemizethanks{

\IEEEcompsocthanksitem Zhilin Zhao is with the School of Computer Science and Engineering, Key Laboratory of Machine Intelligence and Advanced Computing, Ministry of Education, Sun Yat-sen University, Guangzhou 510275, China.\protect\\
E-mail: zhaozhlin@mail.sysu.edu.cn
\IEEEcompsocthanksitem Longbing Cao is with the School of Computing, Macquarie University, Sydney, NSW 2109, Australia.\protect\\
E-mail: longbing.cao@mq.edu.au.
\IEEEcompsocthanksitem Yixuan Zhang is with the Statistics and Data Science, Southeast University, Nanjing, 211189, Jiangsu, China.\protect\\
E-mail: yixuan.zhang@seu.edu.cn.
\IEEEcompsocthanksitem Kun-Yu Lin is with the School of Computer Science and Engineering, Sun Yat-sen University, Guangzhou 510275, China.\protect\\
E-mail: kunyulin14@outlook.com.
\IEEEcompsocthanksitem Wei-Shi Zheng is with the School of Computer Science and Engineering, Key Laboratory of Machine Intelligence and Advanced Computing, Ministry of Education, Sun Yat-sen University, Guangzhou 510275, China.\protect\\
E-mail: wszheng@ieee.org.
\IEEEcompsocthanksitem Corresponding author: Kun-Yu Lin. 
}% <-this % stops an unwanted space
%\thanks{Manuscript received April 19, 2005; revised August 26, 2015.}
}

%\markboth{SUBMISSION TO IEEE TRANS. ON PAMI}%
%{Shell \MakeLowercase{\textit{et al.}}: Bare Demo of IEEEtran.cls for Computer Society Journals}

\IEEEtitleabstractindextext{%
\begin{abstract}
Out-of-distribution (OOD) detection is critical for identifying test samples that deviate from in-distribution (ID) data, ensuring network robustness and reliability. This paper presents a flexible framework for OOD knowledge distillation that extracts OOD-sensitive information from a network to develop a binary classifier capable of distinguishing between ID and OOD samples in both scenarios, with and without access to training ID data. To accomplish this, we introduce Confidence Amendment (CA), an innovative methodology that transforms an OOD sample into an ID one while progressively amending prediction confidence derived from the network to enhance OOD sensitivity. This approach enables the simultaneous synthesis of both ID and OOD samples, each accompanied by an adjusted prediction confidence, thereby facilitating the training of a binary classifier sensitive to OOD. Theoretical analysis provides bounds on the generalization error of the binary classifier, demonstrating the pivotal role of confidence amendment in enhancing OOD sensitivity. Extensive experiments spanning various datasets and network architectures confirm the efficacy of the proposed method in detecting OOD samples.
\end{abstract}

% Note that keywords are not normally used for peerreview papers.
\begin{IEEEkeywords}
Deep Neural Networks, Out-of-distribution Detection, Knowledge Distillation, Generalization Error Bound
\end{IEEEkeywords}}

% make the title area
\maketitle

% To allow for easy dual compilation without having to reenter the
% abstract/keywords data, the \IEEEtitleabstractindextext text will
% not be used in maketitle, but will appear (i.e., to be "transported")
% here as \IEEEdisplaynontitleabstractindextext when the compsoc
% or transmag modes are not selected <OR> if conference mode is selected
% - because all conference papers position the abstract like regular
% papers do.
\IEEEdisplaynontitleabstractindextext
% \IEEEdisplaynontitleabstractindextext has no effect when using
% compsoc or transmag under a non-conference mode.

% For peer review papers, you can put extra information on the cover
% page as needed:
% \ifCLASSOPTIONpeerreview
% \begin{center} \bfseries EDICS Category: 3-BBND \end{center}
% \fi
%
% For peerreview papers, this IEEEtran command inserts a page break and
% creates the second title. It will be ignored for other modes.
\IEEEpeerreviewmaketitle

\IEEEraisesectionheading{\section{Introduction}\label{sec:introduction}}
Deep neural networks, trained on samples referred to as in-distribution (ID), have shown remarkable generalization capabilities for test samples aligned with the same distribution~\cite{DBLP:conf/iclr/SuzukiAN20}. However, they struggle when encountering \textit{out-of-distribution} (OOD) samples derived from different distributions~\cite{10136820, DBLP:conf/nips/YangWZZDPWCLSDZ22}. Alarmingly, these networks are prone to assigning high-confidence predictions to such OOD samples, thereby blurring the critical distinction between ID and OOD samples~\cite{10271740}. This issue arises because standard training procedures do not impose constraints on how the network should react to OOD samples, resulting in distribution vulnerability~\cite{FIG:23}. In real-world applications, the inability to identify OOD samples can lead to severe consequences, emphasizing the critical importance of OOD detection.

For a standard network trained on ID samples, existing methodologies for detecting its OOD samples fall into two main categories~\cite{DBLP:journals/tmlr/SalehiMHLRS22}. The first relies on post-hoc analysis of the output from networks, without altering the original architecture or needing access to the original ID training data. These methods, however, are intrinsically limited by the sensitivity of the existing network to OOD samples. The second modifies the loss function and training process by incorporating OOD prior knowledge. While this enhances OOD sensitivity, it compromises the generalization capabilities for ID samples and necessitates retraining networks on the original ID training data, which may be impractical. This is due to data privacy laws and regulations that may restrict the sharing and reuse of sensitive or proprietary data, making it hard to access the original datasets for retraining purposes.

\begin{figure}
  \centering
  \includegraphics[width=0.47\textwidth]{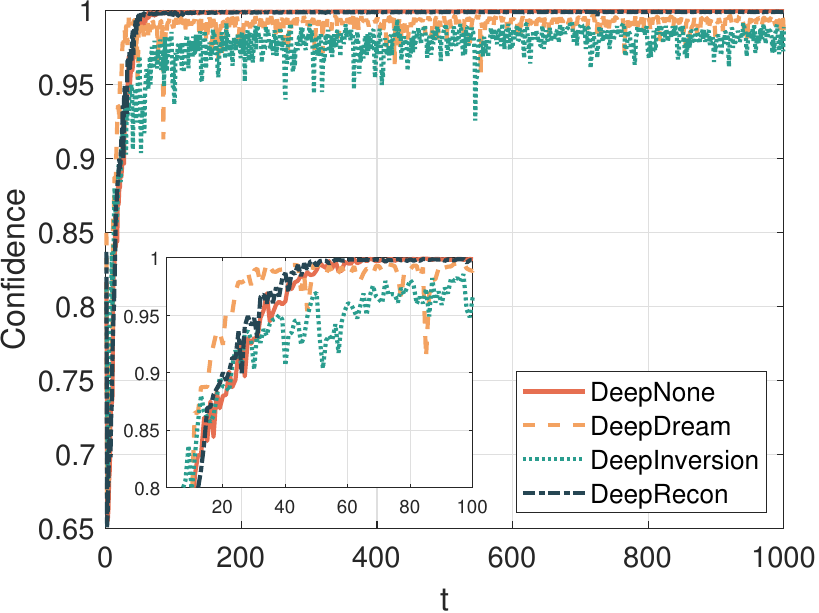}\\
  \caption{Variations in confidence levels for synthesized images over iterations using different methods. The standard network, built on a ResNet18 backbone~\cite{DBLP:conf/cvpr/HeZRS16}, is trained on the CIFAR10 dataset~\cite{CIFAR10:09}. DeepNone amplifies the confidence of noise by optimizing for cross-entropy loss using a random label, without any regularization constraints. In contrast, both DeepDream~\cite{mordvintsev2015deepdream} and DeepInversion~\cite{DBLP:conf/cvpr/YinMALMHJK20} apply additional regularizations to the synthesized samples. When original ID data is available, DeepRecon employs mean squared error to enhance confidence by more closely approximating the original ID data.}
  \label{fig:conf}
\end{figure}

To combine the strengths of these two existing algorithms and address their limitations, we propose a flexible framework for OOD knowledge distillation~\cite{DBLP:conf/iclr/Allen-ZhuL23, DBLP:journals/ijcv/GouYMT21, wu-etal-2023-multi}. This framework extracts OOD-sensitive knowledge from a standard network without altering its architecture or needing access to the training data. The extracted knowledge is then used to train a binary classifier specifically designed to distinguish between ID and OOD samples. The main challenge is extracting this knowledge from the standard network, especially without access to the original ID training data, and refining it to improve OOD sensitivity. This process requires synthesizing network-specific ID and OOD samples to capture knowledge that is highly sensitive to OOD instances. Furthermore, the extracted knowledge cannot be directly applied to ID-OOD differentiation; it must also be aligned with prior OOD insights to substantially enhance OOD sensitivity.

%observation
Inspired by adversarial sample generation~\cite{DBLP:journals/corr/GoodfellowSS14} and diffusion probabilistic models~\cite{DBLP:conf/icml/Sohl-DicksteinW15, DBLP:conf/nips/HoJA20}, a random noise can progress toward an ID sample by incrementally applying subtle perturbations in each transition, enabling the simultaneous synthesis of both ID and OOD samples. These perturbations, anchored in the traditional cross entropy loss~\cite{lin2024diversifying} and a sample constraint, enhance its confidence at each step. The sample constraint guides the synthesis of samples by involving prior knowledge about ID samples in the scenario where training ID samples are unavailable~\cite{mordvintsev2015deepdream, DBLP:conf/cvpr/YinMALMHJK20}. Conversely, when training ID samples are available, it aligns the synthesized samples with their distribution~\cite{DBLP:journals/corr/KingmaW13}. Random noise is viewed as an OOD sample because it follows a distribution different from the training ID. However, an OOD sample with only a few transitions remains OOD, but might exhibit an unexpectedly high-confidence prediction, as illustrated in \cref{fig:conf}. This implies that it is imprudent to fully trust the confidence from the standard network and necessary to encourage the samples in the early stage of the transition to their own low-confidence predictions.

Accordingly, we introduce \textit{Confidence Amendment} (CA) to tackle the challenges associated with OOD knowledge distillation. Based on the observations from the synthesis of ID and OOD samples, CA progressively converts an OOD sample into an ID sample for synthesis, while concurrently enhancing reliance on confidence, thus promoting lower confidence for OOD samples. Accordingly, CA employs a parameterized Markov chain~\cite{DBLP:journals/siammax/DuanWWY20} to convert random noise, treated as OOD sample, into a high-confidence ID sample, synthesizing a sample at each transition. The predicted label distributions from the standard network of these synthesized samples are integrated with a Uniform distribution. Notably, early and later stages of the synthesized samples within this Markov chain carry higher and lower weights on these distributions, respectively. Ultimately, these synthesized samples, with their adjusted predicted label distributions, are utilized to train a binary classifier. This classifier is tailored to discern between samples of high and low confidence levels, thereby equipping it to differentiate between ID and OOD samples.

The main contributions of this paper include:
\begin{itemize}
\item We introduce \textit{Confidence Amendment} (CA), a method that gradually transforms OOD samples into ID-like samples while progressively refining prediction confidence, thereby enhancing network sensitivity to OOD samples.
\item Our approach synthesizes both ID and OOD samples from random noise, facilitating the extraction of OOD-sensitive knowledge from standard networks in both scenarios with and without training ID samples.
\item We establish a generalization error bound showing that refining the knowledge obtained from standard networks substantially improves their ability to distinguish between ID and OOD samples. Extensive experiments validate the effectiveness of our method.\\
\end{itemize}

%\begin{itemize} \item We introduce a novel framework, \textit{OOD Knowledge Distillation}, that effectively enhances OOD detection without requiring modifications to the network architecture or access to the original training data. \item We propose the \textit{Confidence Amendment} (CA) technique, which leverages a parameterized Markov chain to progressively convert OOD samples into ID-like samples, improving the network¡¯s confidence assessment on OOD samples. \item Our method synthesizes ID and OOD samples from random noise, enabling the extraction of OOD-sensitive knowledge from standard networks in a scalable and data-efficient manner. \item We incorporate both near-OOD and far-OOD benchmarks (e.g., CIFAR10, CIFAR100, ImageNet-1K) and evaluate our approach extensively, showing significant improvements in OOD sensitivity and classification accuracy. \item We present a robust binary classifier, trained on synthesized samples with adjusted confidence distributions, that achieves high accuracy in distinguishing ID from OOD samples, validated through extensive experiments. \end{itemize}

The rest of this paper is organized as follows: \cref{sec:relatedwork} offers an overview of related techniques and research directions. \cref{sec:algorithm} elaborates on the proposed Confidence Amendment (CA) method. \cref{sec:tg} presents the theoretical guarantees, with \cref{sec:proof} providing the corresponding proofs. \cref{sec:experiment} presents the empirical results. Finally, \cref{sec:conclusion} provides concluding remarks and discusses future directions.

\section{Related Work}\label{sec:relatedwork}
In this section, we introduce OOD detection, knowledge distillation, data-free distillation, and data synthesis.

\subsection{Out-of-distribution Detection}
For a network trained on ID data, OOD detection~\cite{DBLP:journals/tmlr/SalehiMHLRS22, DBLP:journals/corr/abs-2110-11334, DBLP:journals/ijcv/YangZL23} aims to identify samples that deviate from the distribution of the ID ones. Current methods primarily fall into three groups: those that refrain from using training ID data~\cite{DBLP:conf/nips/LeeLLS18, DBLP:conf/icml/HendrycksBMZKMS22, DBLP:conf/nips/SunGL21, DBLP:conf/nips/ZhuCXLZ00ZC22, DBLP:conf/cvpr/OlberRPSC23, DBLP:conf/cvpr/AhnPK23, DBLP:conf/iclr/ZhangF0DLWLH023, DBLP:conf/iclr/GomesADP22, DBLP:conf/icml/ZhuLYLX023}, those that incorporate it~\cite{DBLP:conf/cvpr/0001AB19, DBLP:conf/cvpr/HsuSJK20,DBLP:conf/nips/BibasFH21, DBLP:conf/cvpr/CaoZ22, FIG:23, DBLP:conf/cvpr/0009GLTL022}, and those that incorporate auxiliary OOD data or synthetic outliers during training to improve OOD sensitivity~\cite{DOE:23,DOS:24,GR:24}.

\subsubsection{Methods Not Utilizing Training ID Data}
OOD detection methods that do not use training ID data compute an OOD score based on the outputs of a trained network, without altering the training process or objective. Maximum over Softmax Probability (MSP)~\cite{DBLP:conf/iclr/HendrycksG17} uses the maximum probabilities from softmax distributions to detect OOD samples, as correctly classified examples usually exhibit higher maximum softmax probabilities compared to OOD samples. ODIN~\cite{DBLP:conf/iclr/LiangLS18} improves softmax score-based detection by applying temperature scaling and small input perturbations. Energy-Based Detector (EBD)~\cite{DBLP:conf/nips/LiuWOL20} introduces an energy score for OOD detection, which is more aligned with the probability density of inputs and less prone to overconfidence issues compared to traditional softmax confidence scores. GradNorm~\cite{DBLP:conf/nips/HuangGL21} detects OOD inputs by leveraging information from the gradient space, specifically utilizing the vector norm of gradients derived from the KL divergence between the softmax output and a uniform probability distribution. GEN~\cite{DBLP:conf/cvpr/LiuLZ23} introduces a generalized entropy score function, suitable for any pre-trained softmax-based classifier. Decoupling MaxLogit (DML)~\cite{DBLP:conf/cvpr/ZhangX23} is an advanced logit-based OOD detection method that decouples MaxCosine and MaxNorm from standard logits to enhance OOD detection. ASH~\cite{DBLP:conf/iclr/DjurisicBAL23} is a post-hoc, on-the-fly activation shaping method for OOD detection that removes a significant portion of a late-layer activation during inference without requiring statistics from training data. FeatureNorm~\cite{DBLP:conf/cvpr/YuSLJL23} computes the norm of the feature map from a selected block, rather than the last one, and utilizes jigsaw puzzles as pseudo OOD to select the optimal block. GradGMM and GradPCA~\cite{carvalho2024towards} explore the use of gradient vectors for OOD detection, showing that the geometry of gradients from pre-trained models contains valuable information for distinguishing OOD samples. These methods predominantly hinge on the insights gleaned from trained networks, constraining the potential for elevating OOD sensitivity. On the other hand, OOD knowledge distillation garners OOD-sensitive knowledge by synthesizing samples for a trained network, unveiling its distribution vulnerabilities and bolstering its sensitivity to OOD.

\subsubsection{Methods Utilizing Training ID Data}
OOD detection methods that use training ID data improve the OOD sensitivity of a trained network by either maintaining or fine-tuning it with ID data while incorporating OOD prior knowledge. Confidence-Calibrated Classifier (CCC)~\cite{DBLP:conf/iclr/LeeLLS18} adds two terms to the cross-entropy loss: one to reduce confidence in OOD samples and another for generating beneficial training samples, jointly training classification and generative networks. Minimum Others Score (MOS)~\cite{huang2021mos} refines the semantic space by grouping similar concepts, simplifying decision boundaries for OOD detection. G-ODIN~\cite{DBLP:conf/cvpr/HsuSJK20} improves ODIN by using decomposed confidence scoring and modified input preprocessing. Density-Driven Regularization (DDR)~\cite{DBLP:conf/nips/HuangWXW022} enforces density consistency and contrastive distribution regularization to separate ID and OOD samples. Watermarking~\cite{DBLP:conf/nips/WangLZZ0L022} utilizes the reprogramming capabilities of deep models to enhance OOD detection via distinct feature perturbations without altering model parameters. Adversarial Reciprocal Point Learning (ARPL)~\cite{DBLP:journals/pami/ChenPWT22} introduces reciprocal points and adversarial constraints to reduce class overlap. ViM~\cite{wang2022vim} integrates features and logits to produce a softmax score for a virtual OOD class. CIDER~\cite{DBLP:conf/iclr/MingSD023} uses hyperspherical embeddings with dispersion and compactness losses for robust prototype separation. HEAT~\cite{DBLP:conf/icml/LafonRRT23} provides an energy-based solution to address MCMC sampling issues in energy-based models. While these methods advance OOD detection, they often require extra training phases for fine-tuning or retraining, which can affect generalization and scalability. In contrast, OOD knowledge distillation derives OOD-sensitive binary classifiers from a trained network without altering the network itself.

\subsubsection{Methods Utilizing Training OOD Data}
Several recent methods have shown improved OOD detection performance by explicitly leveraging auxiliary OOD datasets during training. For example, Diversified Outlier Exposure (DOE)~\cite{DOE:23} employs informative extrapolation guided by diversity-aware objectives to synthesize effective OOD samples for training. Similarly, Diverse Outlier Sampling (DOS)~\cite{DOS:24} proposes a diverse outlier sampling strategy by applying K-Means clustering in feature space to select representative OOD samples, improving the informativeness and diversity of training data. GReg~\cite{GR:24} further introduces gradient-based regularization to enforce local consistency in the score function and pairs it with an energy-based sampling algorithm, leading to stronger OOD robustness. These approaches demonstrate the benefit of exposing models to carefully selected or synthesized OOD examples, in contrast to purely post-hoc scoring-based detection. Our work differs by not assuming access to diverse or well-structured OOD training data, and instead focuses on scoring-based separation without requiring external outlier exposure.

\begin{figure*}[t]
\centering
\includegraphics[width=1\textwidth]{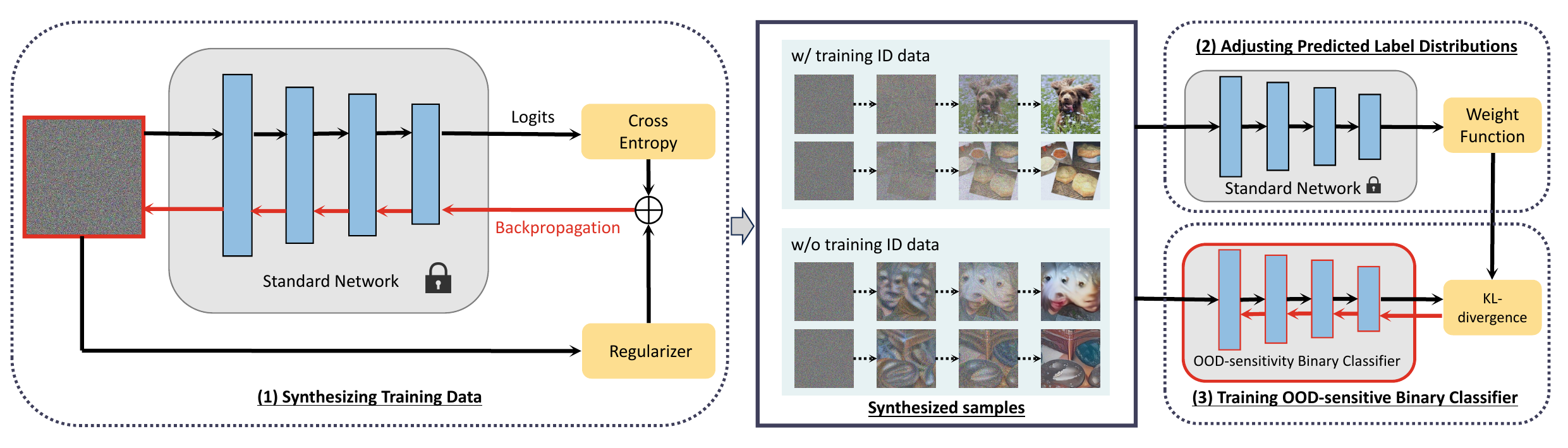}
\caption{Framework of Confidence Amendment (CA). It enhances OOD detection by progressively distilling OOD-sensitive knowledge from a pre-trained standard network to train a binary classifier that distinguishes between ID and OOD samples. The framework consists of three main sequential components: (1) Synthesizing Training Data through a Markov chain to gradually transform OOD samples into ID samples (\cref{sec:CR1}), (2) Adjusting Predicted Label Distributions by applying a weight function to the predicted label distributions of synthesized samples, ensuring low-confidence predictions for OOD samples (\cref{sec:CR2}), and (3) Training an OOD-sensitive Binary Classifier using these adjusted predictions and knowledge distillation, enabling precise ID and OOD differentiation based on confidence levels (\cref{sec:CR3}). The red line represents backpropagation to update the parameters.}
\label{fig:fw}
\end{figure*}

\subsection{Knowledge and Data-free Distillation}
Knowledge distillation~\cite{DBLP:conf/iclr/Allen-ZhuL23, DBLP:journals/ijcv/GouYMT21} involves training a smaller model to replicate the behavior of a larger, more complex model. A foundational approach leverages the soft outputs of the teacher model to train the student~\cite{DBLP:journals/corr/HintonVD15}, while FitNets~\cite{DBLP:journals/corr/RomeroBKCGB14} use intermediate representations for guidance. Generalized distillation extends this concept, linking it to privileged information~\cite{DBLP:journals/corr/Lopez-PazBSV15}. Traditional knowledge distillation relies on access to the original training data. Data-free distillation addresses this by synthesizing data to match the feature statistics~\cite{DBLP:conf/iccv/ChenW0YLSXX019} of the teacher model or by generating data resembling the original training set through transformations~\cite{DBLP:journals/corr/abs-1710-07535} and iterative refinement~\cite{DBLP:conf/aaai/HeoLY019}.

While traditional and data-free distillation focus on transferring ID classification knowledge, our proposed OOD knowledge distillation framework transfers OOD-sensitive knowledge, specifically enabling the student model to distinguish between ID and OOD samples. One related work is multi-level knowledge distillation~\cite{wu-etal-2023-multi}, which utilizes a layered distillation approach designed around the semantic structure of text, relying heavily on language model features and self-supervised training on ID data. However, this approach solely focuses on knowledge distillation without enhancing OOD sensitivity. Its reliance on text-specific semantic representations and extensive ID data further limits its applicability to text-based OOD detection tasks, reducing flexibility for other data types or scenarios with limited ID samples. In contrast, the proposed method introduces a dynamic confidence adjustment mechanism that progressively transforms OOD samples into ID samples through a parameterized Markov chain. CA synthesizes both ID and OOD samples directly, enabling OOD detection without requiring extensive ID data and preserving the original network structure.

\subsection{Data Synthesis}
A variety of methods have been proposed to synthesize artificial data that closely mirrors real-world data. Variational Autoencoder (VAE)~\cite{DBLP:journals/corr/KingmaW13} is a generative model that learns to encode input data into a latent space and then decodes to produce new data samples that mirror the input distribution. Generative Adversarial Networks (GAN)~\cite{DBLP:conf/nips/GoodfellowPMXWOCB14} employs a dual network structure, where a generator crafts synthetic data while a discriminator assesses its authenticity, collaboratively refining the generation process. DeepDream~\cite{mordvintsev2015deepdream} iteratively modifies images to enhance the patterns recognized by a neural network, leading to dream-like generated images. DeepInversion~\cite{DBLP:conf/cvpr/YinMALMHJK20} inverts the roles in the training process, aiming to generate images that maximize the response of particular neurons, providing insights into what deep networks perceive. Existing data synthesis methods primarily focus on the final generated samples. However, our algorithm emphasizes the entire generation process where samples gradually transition from OOD to ID, with their confidence levels steadily increasing. Every sample produced throughout this process is fully utilized by our approach.

\section{Confidence Amendment}\label{sec:algorithm}
OOD knowledge distillation extracts information sensitive to OOD samples from a standard network to train its specialized binary classifier, tailored to discriminate between ID and OOD samples. Let \( \mathbf{x} \in \mathcal{X} \) represent the input and \( y \in [K] \) the associated label, with \( K \) is the total number of labels. The standard network, denoted as \( \mathcal{P}_{\theta}(y | \mathbf{x}) \) and parameterized by \( \theta \), is trained using an ID dataset \( \mathbf{O} = \{(\mathbf{x}_i,y_i)\}_{i = 1}^N \). This dataset consists of \( N \) independent and identically distributed samples drawn from an unknown distribution. The information sensitive to OOD samples is extracted from \( \mathcal{P}_{\theta}(y | \mathbf{x}) \) to train a binary classifier \( \mathcal{P}_{\phi}(c | \mathbf{x}) \), parameterized by \( \phi \), where \( c \in \{0,1\} \) to distinguish between ID and OOD samples. Here, \( c = 1 \) signifies that the test sample \( \mathbf{x} \) is ID, while \( c = 0 \) indicates an OOD sample. In the testing phase, the classifier \( \mathcal{P}_{\phi}(c | \mathbf{x}) \) determines whether a given input \( \mathbf{x} \) is ID or OOD. If identified as ID, the standard network \( \mathcal{P}_{\theta}(y | \mathbf{x}) \) is utilized to predict its label. Conversely, if it is determined to be OOD, the standard network abstains from making a label prediction.

Confidence Amendment (CA), visualized in \cref{fig:fw}, is designed to address the challenges associated with OOD knowledge distillation, specifically those involving the extraction of knowledge from a standard network and the subsequent refinement of this knowledge for training an OOD-sensitive binary classifier. Specifically, for the given standard network \( \mathcal{P}_{\theta}(y | \mathbf{x}) \), CA procedure begins by synthesizing a dataset \( \mathbf{S} = \{\widehat{\mathbf{x}}_i\}_{i = 1}^M \) containing both ID and OOD samples and extracting their corresponding predicted label distributions from the standard network. Drawing on the foundational understanding that OOD samples are anticipated to align with a uniform distribution, the predicted label distributions are melded with a uniform distribution using adaptive weights, thereby augmenting the sensitivity towards OOD samples. The synthesized samples with adjusted predicted label distributions are applied for training the binary classifier \( \mathcal{P}_{\phi}(c | \mathbf{x}) \), which is tasked with distinguishing between ID and OOD samples.

\subsection{Synthesizing Training Data}\label{sec:CR1}
CA synthesizes samples, including both ID and OOD samples, through a parameterized Markov chain. In the transitions within this chain, an OOD sample is gradually converted into an ID sample by elevating its confidence with respect to the standard network \( \mathcal{P}_{\theta}(y | \mathbf{x}) \). Consequently, specific ID and OOD samples for the standard network can be effectively synthesized.

Accordingly, a random noise \( \widehat{\mathbf{x}}_0 \) drawn from a standard distribution \( \mathcal{N}(\mathbf{0}, \mathbf{I}) \) can be considered an OOD sample, as the training ID samples from \( \mathbf{O} \) do not follow this standard distribution, i.e.,
\begin{equation}
\label{eq:x0}
\widehat{\mathbf{x}}_0 \sim \mathcal{N}(\mathbf{0}, \eta \mathbf{I}).
\end{equation}
Taking inspiration from diffusion probabilistic models~\cite{DBLP:conf/icml/Sohl-DicksteinW15, DBLP:conf/nips/HoJA20}, the randomly-initialized OOD sample incrementally transitions to an ID sample after \( T \) transformations within a Markov chain, defined as follows:
\begin{equation}
\label{eq:chain}
\mathcal{P}_t(\widehat{\mathbf{x}}_{t} | \widehat{\mathbf{x}}_{t - 1})  = \mathcal{N}(\bm{\mu}_{t}, \eta \mathbf{I}), \quad t \in [1, T],
\end{equation}
where \( \eta \) is the variance, \( \bm{\mu}_{t} \) is the expectation of \( \widehat{\mathbf{x}}_t \), and \( T \) represents the maximum transition time. To facilitate the evolution of an OOD sample into an ID sample for the standard network, the confidence level of the sample needs enhancement, as ID samples typically exhibit high-confidence predictions. Drawing from the principles of adversarial sample generation models~\cite{DBLP:journals/corr/GoodfellowSS14}, the confidence of a randomly-initialized OOD sample can be boosted by introducing a small, informative perturbation that relates to both the standard network and a random label. Consequently, the expectation \( \bm{\mu}_{t} \) can be expressed as:
\begin{equation}
\label{eq:per}
\bm{\mu}_{t} = \widehat{\mathbf{x}}_{t - 1} - \rho \nabla G_{\theta}(\widehat{\mathbf{x}}_{t - 1}).
\end{equation}
Here, \( \rho \) is a coefficient denoting the magnitude of the perturbation. The term \( G_{\theta}(\widehat{\mathbf{x}}) \) is based on the standard network \( \mathcal{P}_{\theta}(y | \widehat{\mathbf{x}}) \) and incorporates a regularizer \( \mathcal{R}(\widehat{\mathbf{x}}) \) applied to the synthesized sample \( \widehat{\mathbf{x}} \), formulated as:
\begin{equation}
G_{\theta}(\widehat{\mathbf{x}}) = - \log \mathcal{P}_{\theta}(y | \widehat{\mathbf{x}}) + \mathcal{R}(\widehat{\mathbf{x}}),
\label{eq:G}
\end{equation}
where \( \mathcal{R}(\widehat{\mathbf{x}}) \) encourages the distribution of synthesized samples to closely align with that of the original training samples.

Specifically, when the training dataset \( \mathbf{O} \) is available, inspired by the concept of the variational autoencoder~\cite{DBLP:journals/corr/KingmaW13}, the distribution discrepancy between the real and synthesized samples can be minimized. Accordingly, the regularizer \( \mathcal{R}(\widehat{\mathbf{x}}) \) applicable when the training datasets are available, which is termed DeepRecon, can be defined as follows:
\begin{equation}
\mathcal{R}(\widehat{\mathbf{x}}) = \mathcal{R}^+(\widehat{\mathbf{x}}) = \beta_{\text{MSE}}\text{MSE}(\widehat{\mathbf{x}}, \mathbf{x}),
\label{eq:RS}
\end{equation}
where MSE represents the mean squared error scaled by factor \( \beta_{\text{MSE}} \), and \( \mathbf{x} \) is a sample randomly selected from the training dataset \( \mathbf{O} \). Additionally, the label used in the standard network \( \mathcal{P}_{\theta}(y | \widehat{\mathbf{x}}) \) in the computation of \( G_{\theta}(\widehat{\mathbf{x}}) \) is the ground-truth label \( y \) corresponding to the randomly-selected \( \mathbf{x} \). Alternatively, when the training dataset \( \mathbf{O} \) is unavailable, one can regularize the distribution of synthesized samples by using priors, a strategy inspired by DeepDream~\cite{mordvintsev2015deepdream} and DeepInversion~\cite{DBLP:conf/cvpr/YinMALMHJK20}, which ensures stable convergence towards valid samples. In this case, the regularizer \( \mathcal{R}(\widehat{\mathbf{x}}) \) for unavailable training datasets can be expressed as:
\begin{equation}
\mathcal{R}(\widehat{\mathbf{x}}) = \mathcal{R}^-(\widehat{\mathbf{x}}) = \beta_{\text{TV}} \mathcal{R}_{\text{TV}}(\widehat{\mathbf{x}}) + \beta_{l_2} \mathcal{R}_{l_2}(\widehat{\mathbf{x}}) + \beta_{\text{f}} \mathcal{R}_{\text{f}}(\widehat{\mathbf{x}}),
\label{eq:RU}
\end{equation}
where \( \mathcal{R}_{\text{TV}}(\widehat{\mathbf{x}}) \), \( \mathcal{R}_{l_2}(\widehat{\mathbf{x}}) \), and \( \mathcal{R}_{\text{f}}(\widehat{\mathbf{x}}) \) penalize the total variance, \( l_2 \) norm, and the distribution of intermediate feature maps of \( \widehat{\mathbf{x}} \), respectively, each scaled by their corresponding factors \( \beta_{\text{TV}} \), \( \beta_{l_2} \), and \( \beta_{\text{f}} \). The three regularization terms are introduced in DeepInversion~\cite{DBLP:conf/cvpr/YinMALMHJK20}.

As per \cref{eq:chain}, \cref{eq:per}, and \cref{eq:G}, coupled with the application of the reparameterization trick~\cite{DBLP:conf/icml/GalG16}, the synthesized sample \( \widehat{\mathbf{x}}_{t} \) at time \( t \in [1, T] \) within the parameterized Markov chain can be determined in closed form as follows:
\begin{equation}
\widehat{\mathbf{x}}_{t} = \widehat{\mathbf{x}}_{t - 1} + \rho \nabla \log \mathcal{P}_{\theta}(y | \widehat{\mathbf{x}}_{t - 1}) - \rho \nabla \mathcal{R}(\widehat{\mathbf{x}}_{t - 1}) + \eta \mathbf{z},
\label{eq:xt}
\end{equation}
where \( \mathbf{z} \) is a random variable following a standard distribution, i.e., \( \mathbf{z} \sim \mathcal{N}(\mathbf{0}, \mathbf{I}) \). Given a random initial dataset \( \mathbf{S}_0 = \{\widehat{\mathbf{x}}_{i,0}\}_{i = 1}^N \) consisting of \( N \) independent random variables drawn from \( \mathcal{N}(\mathbf{0}, \mathbf{I}) \), the corresponding synthesized data subset \( \mathbf{S}_t = \{\widehat{\mathbf{x}}_{i,t}\}_{i = 1}^N \) at time \( t \in [1,T] \) can be derived through \cref{eq:xt}. Consequently, by aggregating all such datasets across the various time steps, we obtain the synthesized dataset:
\begin{equation}
\mathbf{S} =  \bigcup_{t = 0}^T \mathbf{S}_t = \{\widehat{\mathbf{x}}_{i,0:T}\}_{i = 1}^N,
\end{equation}
which encompasses \( N(T + 1) \) samples. Notably, the OOD samples present at time \( 0 \) gradually evolve into ID samples as time progresses to \( T \). Therefore, the dataset \( \mathbf{S} \) encapsulates both distinct OOD and ID samples pertinent to the standard network \( \mathcal{P}_{\theta}(y | \mathbf{x}) \).

\subsection{Adjusting Predicted Label Distributions}\label{sec:CR2}
For samples originating from the synthesized dataset \( \mathbf{S} \), their predicted label distributions can be retrieved from the standard network \( \mathcal{P}_{\theta}(y | \mathbf{x}) \). Samples in \( \mathbf{S} \) with few transitions can be regarded as OOD, owing to the substantial discrepancy between their distribution and that of the ID. However, as illustrated in \cref{fig:conf}, these samples might receive unexpectedly high-confidence predictions from the standard network, despite their characteristics. This phenomenon arises due to the distributional vulnerability of the standard network~\cite{FIG:23}. While the network is trained on ID samples, it does not have constraints imposed on OOD samples. This can lead to uncertain and occasionally high-confidence predictions for OOD samples. Thus, utilizing the synthesized samples and their predicted label distributions from the standard network directly for training a binary classifier would not enhance the OOD sensitivity of the network.

To improve OOD sensitivity, refining the extracted knowledge by adjusting the predicted label distributions of synthesized samples is necessary, ensuring that OOD samples correlate with low-confidence predictions. The fundamental idea behind this approach is to incrementally place trust in the prediction confidence. Specifically, in the process of synthesizing samples, earlier samples are more likely to be OOD, and therefore their high-confidence predictions are not reliable. In contrast, later samples tend to be ID, and their high-confidence predictions are reliable. Therefore, for a synthesized sample \( \widehat{\mathbf{x}}_{i,t} \) at time \( t \), with \( i \in [N] \) and \( t \in [0,T] \), the adjusted predicted label distribution can be computed as:
\begin{equation}
\mathcal{Q}_{\theta}(y | \widehat{\mathbf{x}}_{i,t}) = \left( 1 - \alpha(t) \right) \mathcal{U} + \alpha(t) \mathcal{P}_{\theta}(y | \widehat{\mathbf{x}}_{i,t}),
\label{eq:Q}
\end{equation}
where \( \mathcal{U} \) denotes the uniform distribution and \( \alpha \) represents a weight function defined as:
\begin{equation}
\alpha(t) = \left(\frac{t}{T} \right)^a, \quad a \in [0, +\infty).
\end{equation}
\cref{fig:alpha} displays the curves of the weight function for various coefficients of \( a \geq 0 \). When \( a = 0 \), all function values are unity, suggesting complete reliance of the synthesized samples on the confidence provided by the standard network. For \( a > 0 \), the function exhibits a monotonic increase, indicating that the synthesized samples will progressively trust the confidence levels from the standard network, with higher trust accorded as a sample approaches ID characteristics. Consequently, with \( a > 0 \), confidence levels from the standard network are revised in the process of confidence amendment, assigning lower confidence to OOD samples and higher confidence to ID samples, thereby heightening OOD sensitivity.

This confidence-based supervision in~\cref{eq:Q} brings several key benefits. First, confidence scores are readily available from most probabilistic models and require no additional computation or architectural modification, making our method broadly applicable across different backbones. Second, confidence offers strong interpretability, as it directly reflects the belief of the model in its prediction, which is particularly important for understanding and mitigating overconfidence on OOD samples. Third, our formulation introduces a flexible interpolation mechanism via $\alpha(t)$, allowing smooth adjustment of trust in the prediction of the model during the synthesis process. Finally, this simple yet effective structure facilitates theoretical analysis, as shown in our generalization bound in~\cref{sec:tg}, and allows for straightforward tuning in practice.

\begin{figure}
  \centering
  \includegraphics[width=0.48\textwidth]{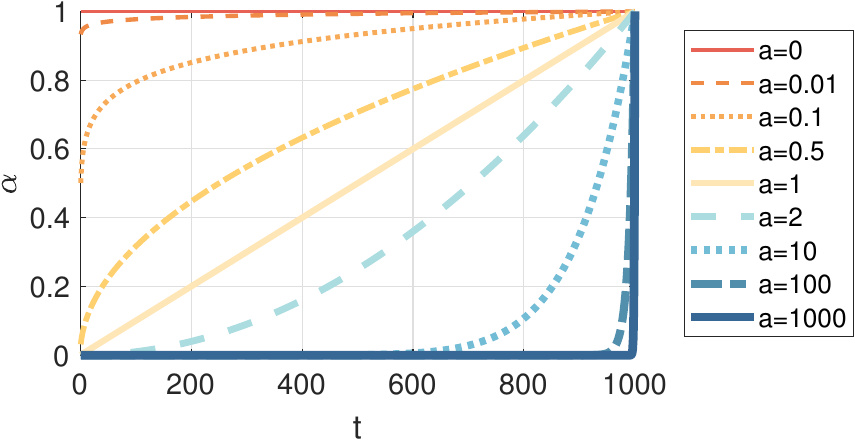}\\
  \caption{Curves of the function $\alpha(t)$ under different parameters. Best viewed in color.}
  \label{fig:alpha}
\end{figure}

\subsection{Training OOD-sensitive Binary Classifier}\label{sec:CR3}
To harness the deeper knowledge encapsulated within the standard network \( \mathcal{P}_{\theta}(y | \mathbf{x}) \), for a given input \( \mathbf{x} \), we aim to map it to the adjusted predicted label distribution \( \mathcal{Q}_{\theta}(y | \mathbf{x}) \) using an auxiliary network \( \mathcal{P}_{\phi}(y | \mathbf{x}) \) parameterized by \( \phi \). Subsequently, a specialized binary classifier capable of distinguishing between ID and OOD samples can be devised based on this auxiliary network. Following conventional knowledge distillation approaches, OOD-sensitive information from \( \mathcal{Q}_{\theta}(y | \mathbf{x}) \) can be transferred to \( \mathcal{P}_{\phi}(y | \mathbf{x}) \) by optimizing the objective
\begin{equation}
\min_{\phi} \sum_{\mathbf{x} \in \mathbf{S}}  \sum_{y \in [K]} \mathcal{D}_{\text{KL}} \left( \mathcal{P}_{\phi}(y | \mathbf{x}) || \mathcal{Q}_{\theta}(y | \mathbf{x}) \right),
\end{equation}
where \( \mathcal{D}_{\text{KL}} \left( \cdot || \cdot \right) \) denotes the Kullback-Leibler (KL) divergence. Importantly, the classifier is not trained on raw predictions from the original network. Instead, we use the amended confidence distribution $\mathcal{Q}_{\theta}(y | \mathbf{x})$, which dynamically interpolates between the network prediction and a uniform prior. This ensures that the supervision signal is calibrated according to the synthesis stage of the input, helping the classifier avoid overfitting to unreliable high-confidence outputs from the base model on OOD samples. As a result, the binary classifier focuses on uncertainty-aware decision boundaries that are more robust and generalizable.

Inspired by the maximum over softmax probability technique~\cite{DBLP:conf/iclr/HendrycksG17}, which computes an OOD score for a test sample based on prediction confidence, we can formulate the specialized binary classifier for the standard network \( \mathcal{P}_{\theta}(y | \mathbf{x}) \) using the auxiliary network \( \mathcal{P}_{\phi}(y | \mathbf{x}) \):
\begin{equation}
\begin{aligned}
\mathcal{P}_{\phi}(c = 1| \mathbf{x}) & = \max_{y \in [K]} \mathcal{P}_{\phi}(y | \mathbf{x}),\\
\mathcal{P}_{\phi}(c = 0| \mathbf{x}) & = 1 - \max_{y \in [K]} \mathcal{P}_{\phi}(y | \mathbf{x}),
\end{aligned}
\end{equation}
with \( c = 1 \) signifying that the test sample \( \mathbf{x} \) is ID and \( c = 0 \) denoting an OOD sample. Thus, \( \mathcal{P}_{\phi}(c| \mathbf{x}) \) acts as the specialized binary classifier corresponding to the standard network \( \mathcal{P}_{\theta}(y | \mathbf{x}) \). This classifier, trained with specific samples derived from the standard network, is tailored to differentiate between ID and OOD, exhibiting sensitivity to the latter. During testing, for a given sample \( \mathbf{x} \), the value of $\mathcal{P}_{\phi}(c = 0| \mathbf{x})$ serves as the OOD score. A higher score suggests a greater likelihood that the sample is OOD. The process of the proposed CA is outlined in \cref{alg:CA}.

\begin{algorithm}[t]
    \caption{Confidence Amendment (CA)}
    \label{alg:CA}
    \begin{algorithmic}[1]
    \Require Standard network $\mathcal{P}_{\theta}(y | \mathbf{x})$, weight function coefficient $a$, maximum transition time $T$
    \State Synthesize a dataset $\mathbf{S}$ by integrating samples at $t \in [0,T]$ in the Markov chain:
    \begin{equation*}
    \widehat{\mathbf{x}}_{t} = \widehat{\mathbf{x}}_{t - 1} + \rho \nabla \log \mathcal{P}_{\theta}(y | \widehat{\mathbf{x}}_{t - 1}) - \rho \nabla \mathcal{R}(\widehat{\mathbf{x}}_{t - 1}) + \eta \mathbf{z}.
    \end{equation*}
    \State For each synthesized sample $\widehat{\mathbf{x}}_{i,t} (i \in [N], t \in [0,T])$ in $\mathbf{S}$, adjust the predicted label distribution:
    \begin{equation*}
    \mathcal{Q}_{\theta}(y | \widehat{\mathbf{x}}_{i,t}) = \alpha(t) \mathcal{U} + \left(1 - \alpha(t)\right) \mathcal{P}_{\theta}(y | \widehat{\mathbf{x}}_{i,t}).
    \end{equation*}
    \State Distill knowledge from $\mathcal{Q}_{\theta}(y | \mathbf{x})$ into auxiliary network $\mathcal{P}_{\phi}(y| \mathbf{x})$ by optimizing:
    \begin{equation*}
    \min_{\phi} \sum_{\mathbf{x} \in \mathbf{S}}  \sum_{y \in [K]} \mathcal{D}_{\text{KL}} \left( \mathcal{P}_{\phi}(y | \mathbf{x}) || \mathcal{Q}_{\theta}(y | \mathbf{x}) \right).
    \end{equation*}
    \State Formulate specialized binary classifier $\mathcal{P}_{\phi}(c| \mathbf{x})$ based on auxiliary network $\mathcal{P}_{\phi}(y| \mathbf{x})$:
    \begin{equation*}
    \begin{aligned}
    \mathcal{P}_{\phi}(c = 1| \mathbf{x}) & = \max_{y \in [K]} \mathcal{P}_{\phi}(y | \mathbf{x}),  \\
    \mathcal{P}_{\phi}(c = 0| \mathbf{x}) & = 1 - \max_{y \in [K]} \mathcal{P}_{\phi}(y | \mathbf{x}).
    \end{aligned}
    \end{equation*}
    \Ensure OOD-sensitive binary classifier $\mathcal{P}_{\phi}(c| \mathbf{x})$
    \end{algorithmic}
\end{algorithm}

\section{Theoretical Guarantees}\label{sec:tg}
In this section, we present \cref{co:a}, which establishes a generalization error bound for our OOD-sensitive binary classifier. The core of our analysis lies in understanding how the weight function \(\alpha(t)\), parameterized by the weighting coefficient \(a\), affects the ability of the specialized binary classifier $\mathcal{P}_{\phi}(c| \mathbf{x})$ to distinguish between ID and OOD samples. By deriving this bound, we aim to guide the selection of \(a\) to optimize the sensitivity of the classifier to OOD samples, thereby demonstrating the effectiveness of our algorithm.

This analysis is grounded in fundamental concepts of fat-shattering dimensions $\text{fat}(\cdot)$~\cite{EDFL:94} and covering numbers~\cite{ML:14}, which provide the structural basis for Vapnik's method of structural risk minimization~\cite{SRM:98}. For convenience, we assume the hypothesis space of the specialized binary classifiers $\mathcal{P}_{\phi}(c| \mathbf{x})$ is denoted as $\mathcal{H}$, then we have
\begin{equation}
h(\mathbf{x}) = \mathcal{P}_{\phi}(c = 0| \mathbf{x}).
\end{equation}
Let $\mathcal{P}_\mathbf{S}$ represent the mixture distribution of ID and OOD samples drawn from $\mathbf{S}$, and define $l(h(\mathbf{x}), c) = \mathbf{I}[h(\mathbf{x}) = c]$ as the $0$-$1$ loss function. Then, the expected and empirical risks of $h(\mathbf{x})$ can be expressed as:
\begin{equation*}
\mathcal{L}_{\mathcal{P}_\mathbf{S}} \left[ h \right] = \int_{\mathcal{P}_\mathbf{S}} l(h(\mathbf{x}),c) \,d \mathbf{x}, \quad \mathcal{L}_{\mathbf{S}} \left[ h \right] = \frac{1}{\vert \mathbf{S}\vert} \sum_{\mathbf{x} \sim \mathbf{S}} l(h(\mathbf{x}),c).
\end{equation*}
\begin{theorem}\label{co:a}
Consider a hypothesis space \(\mathcal{H}\) confined within a ball of radius \(R\). Let \(h \in \mathcal{H}\) be a hypothesis that correctly classifies \(M = N(T + 1)\) samples from \(\mathbf{S} \in \mathcal{P}\) with a margin \(\gamma_t = \zeta - r_t\) and fat-shattering dimension \(\kappa_t = \text{fat}(\gamma_t / 8)\) for each dataset \(\mathbf{S}_t\) assigned to \(K\) classes. Here, \(\zeta \geq 1\), \(r_t = \max_{\mathbf{x} \in \mathbf{S}_t} h(\mathbf{x})\), and \(t \in [0, T]\). Let \(\alpha(t) = \left(\frac{t}{T}\right)^a\) for \( t \in [0,T] \) be a weight function used to smooth the output distribution of a standard network, where \( a \geq 0 \). With probability at least \( 1 - \delta \), the approximate generalization error bound is given by
\begin{equation*}
\mathcal{L}_{\mathcal{P}_{\mathcal{S}}}[h] \leq \frac{ 620 R^2 \log \left( 32M \right) K^2 \varphi(a)}{4T\sqrt{M^3} (K - 1)^2 } + \frac{9}{\sqrt{N \delta}},
\end{equation*}
where
\begin{equation*}
\varphi(a) = \frac{(a + 1)(2a + 1)}{a^2},
\end{equation*}
which depends on the weight function coefficient \(a \geq 0\) and influences the generalization error bound.
\end{theorem}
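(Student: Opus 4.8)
The plan is to reduce the claim to a classical margin-based generalization bound for real-valued function classes — the fat-shattering / covering-number machinery underlying Vapnik's structural risk minimization~\cite{EDFL:94,ML:14,SRM:98} — and then to exploit the explicit form $\alpha(t)=(t/T)^{a}$ of the amendment weights to turn the sum over transition times into the closed-form factor $\varphi(a)$.

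First I would stratify the synthesized set $\mathbf{S}=\bigcup_{t=0}^{T}\mathbf{S}_{t}$ and treat each stratum $\mathbf{S}_{t}$ (of size $N$) separately. Since $h\in\mathcal{H}$ lies in a ball of radius $R$, the fat-shattering dimension at scale $\gamma_{t}/8$ obeys $\kappa_{t}=\text{fat}(\gamma_{t}/8)\le (8R/\gamma_{t})^{2}$ up to a universal constant, and the standard large-margin covering-number theorem then controls the risk of $h$ restricted to $\mathbf{S}_{t}$ by a quantity of order $(\kappa_{t}/N)\log(32N)$, with the $K$-way structure of the auxiliary network $\mathcal{P}_{\phi}(y\mid\mathbf{x})$ — each of its $K$ output coordinates must be covered, and the $\max_{y}$-reduction to the binary score $h=1-\max_{y}\mathcal{P}_{\phi}(y\mid\cdot)$ lives on the scale $(K-1)/K$ — contributing the factor $K^{2}/(K-1)^{2}$. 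A union bound over the $T+1$ strata, followed by averaging the per-stratum risks against the mixture $\mathcal{P}_{\mathbf{S}}$, gives a master inequality of the shape
\[ \mathcal{L}_{\mathcal{P}_{\mathbf{S}}}[h]\ \lesssim\ \frac{R^{2}K^{2}\log(32M)}{(K-1)^{2}}\cdot\frac{1}{M}\,\Psi\!\big(\{\gamma_{t}\}_{t=0}^{T}\big)\ +\ (\text{stochastic term}), \]
where $\Psi$ collects the $\gamma_{t}$-dependent complexity contributions of the strata.

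Next I would make $\Psi$ explicit using the amendment. From $\mathcal{Q}_{\theta}(y\mid\widehat{\mathbf{x}}_{i,t})=(1-\alpha(t))\mathcal{U}+\alpha(t)\mathcal{P}_{\theta}(y\mid\widehat{\mathbf{x}}_{i,t})$ together with $h(\mathbf{x})=\mathcal{P}_{\phi}(c=0\mid\mathbf{x})=1-\max_{y}\mathcal{P}_{\phi}(y\mid\mathbf{x})$, the worst-case score $r_{t}=\max_{\mathbf{x}\in\mathbf{S}_{t}}h(\mathbf{x})$ is — up to the distillation residual and under the design premise that synthesized samples attain near-maximal teacher confidence — a decreasing affine surrogate of $\alpha(t)$, which after the natural normalization reads $r_{t}\approx\tfrac{K-1}{K}(1-\alpha(t))$; hence $\gamma_{t}=\zeta-r_{t}$ stays bounded away from zero for every $t$ (here $\zeta\ge 1>\tfrac{K-1}{K}$ is used) and inherits the $1-\alpha(t)$ profile. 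Substituting $\alpha(t)=(t/T)^{a}$ and replacing the sum over $t$ by the corresponding integral $\sum_{t=0}^{T}(\cdot)\approx T\!\int_{0}^{1}(\cdot)\,\mathrm{d}x$ — the step responsible for the word ``approximate'' in the statement — collapses $\Psi$ onto the elementary integral $\int_{0}^{1}(1-x^{a})^{2}\,\mathrm{d}x=\tfrac{2a^{2}}{(a+1)(2a+1)}$, so that the stated factor emerges as $\varphi(a)=\tfrac{(a+1)(2a+1)}{a^{2}}$, namely twice the reciprocal of this integral; the remaining $T$- and $M$-powers in the prefactor $\tfrac{1}{4T\sqrt{M^{3}}}$ are then accounted for by the $T+1$ strata of size $N$ each, with $M=N(T+1)$.

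Finally I would absorb the absolute constants — the $64$ from the radius/fat-shattering estimate, the $2$'s and $8$'s from the covering-number theorem, and $\log(32M)$ — into the stated $620/4$, and bound the deviation of the empirical risk $\mathcal{L}_{\mathbf{S}}[h]$ from $\mathcal{L}_{\mathcal{P}_{\mathbf{S}}}[h]$ by a Chebyshev-type estimate over the $N$ independent initial draws, producing the additive $9/\sqrt{N\delta}$ that holds with probability at least $1-\delta$. I expect the two ``gluing'' steps to be the main obstacle: (i) pinning $r_{t}$ down as a clean function of $\alpha(t)$, which requires controlling the residual of the distillation (KL) objective and the minimum over synthesized samples of the teacher confidence; and (ii) justifying the sum-to-integral passage tightly enough that the precise factor $\varphi(a)$ — not merely its $\Theta(1/a^{2})$ blow-up as $a\to 0^{+}$ — survives, all while keeping the $K$-dependence in the sharp form $K^{2}/(K-1)^{2}$.
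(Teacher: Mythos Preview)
Your proposal is correct and follows essentially the same route as the paper: stratify by $t$, invoke the fat-shattering radius bound $\kappa_t\lesssim R^2/\gamma_t^2$ inside a covering-number/symmetrization generalization estimate, compute $r_t$ explicitly from $\alpha(t)=(t/T)^a$ and the $K$-class max-softmax structure to extract the factor $(K-1)^2/K^2$, and then lower-bound $\sum_{t}(1-(t/T)^a)^2$ by $\int_0^T(1-(t/T)^a)^2\,dt=2Ta^2/((a+1)(2a+1))$, whose reciprocal produces $\varphi(a)$. The only notable discrepancy is the additive $9/\sqrt{N\delta}$: in the paper this is not a separate Chebyshev step but arises from simplifying the confidence term $\tfrac{6}{N}\log\tfrac{2N}{\delta}$ of the union bound via the elementary inequality $\log x\le \sqrt{x}$.
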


\cref{co:a} demonstrates that the weighting coefficient \( a \geq 0 \) in the weight function \(\alpha(t)\) influences the generalization error bound of the specialized binary classifier through the function \(\varphi(a)\). Since
\begin{equation}
\frac{d \varphi(a)}{da} = \frac{-3a - 2}{a^3} < 0, \forall a > 0,
\label{eq:da}
\end{equation}
\(\varphi(a)\) is monotonically decreasing with respect to \( a \). Thus, by utilizing \( M = N(T + 1) \) synthesized samples from the parameterized Markov chain in \cref{eq:xt} and applying \(\alpha_t = \left(\frac{t}{T}\right)^a\) to integrate knowledge smoothly from the standard network, the binary classifier designed to distinguish between ID and OOD samples can achieve a lower generalization error bound with a larger weighting coefficient \( a \geq 0 \). The function curve of \(\varphi(a)\) is illustrated in \cref{fig:curve}, providing insight into how adjusting \(a\) impacts the performance of the classifier. This generalization error bound helps in selecting an optimal \(a\) to improve OOD sensitivity, thereby enhancing the robustness and reliability of the classifier in distinguishing between ID and OOD samples.

This theoretical result validates our core insight regarding the importance of adjusting confidence levels to improve OOD sensitivity. As discussed in \cref{sec:CR2}, simply using the predicted label distributions of synthesized samples directly from the standard network without adjustment (\(a = 0\)) is insufficient, as OOD samples may receive high-confidence predictions due to the distributional limitations of the network. By tuning \(a\) in the weighting function \(\alpha(t)\), we can progressively lower the confidence for OOD samples while increasing it for ID samples as they transition through the Markov chain. This mechanism effectively aligns low-confidence predictions with OOD samples and high-confidence predictions with ID samples, thereby enhancing the OOD sensitivity of the classifier.

\begin{figure}
  \centering
  \includegraphics[width=0.4\textwidth]{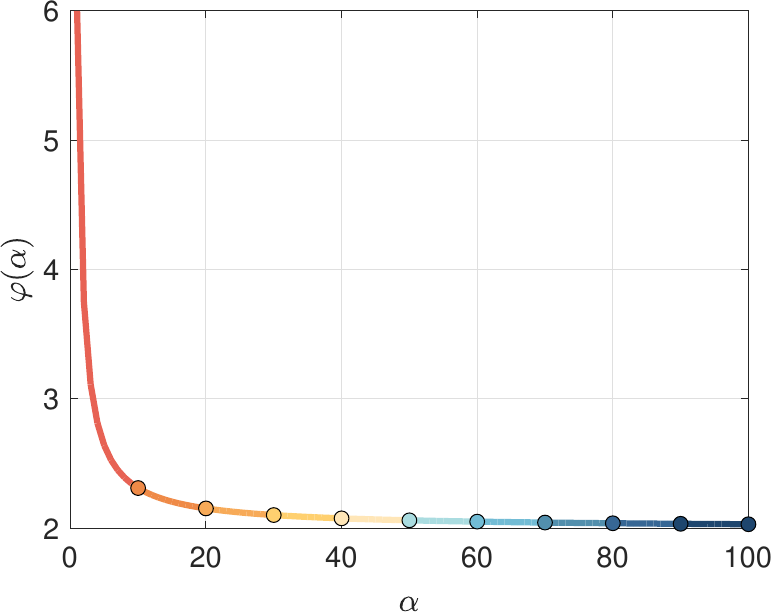}
  \caption{Function Curve of $\varphi(a)$ with respect to $a \geq 0$.}
  \label{fig:curve}
\end{figure}

\section{Proofs}\label{sec:proof}
This section outlines the necessary definitions, assumptions, and lemmas that support the derivation of the bound in \cref{co:a}, along with the proof process, leading to a clear understanding of the interplay between the weight function and the generalization performance of the OOD-sensitive classifier.

\subsection{Definitions and Preliminary Lemmas}
Before deriving the generalization error bound, we introduce essential definitions and lemmas that form the theoretical foundation for the analysis.
\begin{definition}[Fat Shattering Dimension~\cite{EDFL:94}]\label{def:func}
Let $\mathcal{H}$ be a set of real-valued functions. A set of points $\mathcal{X}$ is said to be $\gamma$-shattered by $\mathcal{H}$ if there exist real numbers $r_\mathbf{x}$, each indexed by $\mathbf{x} \in \mathcal{X}$, such that for all binary vectors $b$, also indexed by $\mathbf{x}$, there exists a function $h_b \in \mathcal{H}$ satisfying
\begin{equation*}
h_b(\mathbf{x})  =\left \{
\begin{array}{ll}
  r_\mathbf{x} + \gamma & \text{if} \quad b_\mathbf{x} = 1,\\
  r_\mathbf{x} - \gamma & \text{otherwise.} \\
\end{array} \right.
\end{equation*}
The fat-shattering dimension denoted as $\text{fat}_{\mathcal{H}}$, of the set $\mathcal{H}$ is a function mapping positive real numbers to integers. Specifically, it assigns a value $\gamma$ to the size of the largest set $\mathcal{X}$ that is $\gamma$-shattered by $\mathcal{H}$, yielding infinity if no such finite set exists.
\end{definition}
\begin{definition}[$\epsilon$-covering~\cite{ML:14}]\label{def:cover}
Let $(\mathcal{X}, d)$ be a (pseudo-)metric space and $\mathcal{A}$ a subset of $\mathcal{X}$ with a specified $\epsilon > 0$. A set $\mathcal{B} \subseteq \mathcal{A}$ is called an $\epsilon$-cover for $\mathcal{A}$ if, for every element $A \in \mathcal{A}$, there exists an element $B \in \mathcal{B}$ satisfying $d(A, B) \leq \epsilon$. The $\epsilon$-covering number of $\mathcal{A}$, denoted as $N(\epsilon, \mathcal{A})$, represents the minimal cardinality of an $\epsilon$-cover for $\mathcal{A}$. This number is defined to be infinite ($\infty$) if no finite $\epsilon$-cover exists for $\mathcal{A}$.
\end{definition}
\begin{lemma}[Covering Number~\cite{SRM:98}]\label{lm:cover}
Let $\mathcal{H}$ be a class of functions mapping $\mathcal{X} \rightarrow [b_1,b_2]$, and let $\mathcal{P}_\mathcal{X}$ represent a distribution over $\mathcal{X}$. Given \(0 < \epsilon < 1\), set \(\kappa = \text{fat}_{\mathcal{H}}(\epsilon / 4)\). Then, the expectation
\begin{equation*}
\mathbb{E}_{\mathcal{X}^m} \left( \mathfrak{N}(\epsilon, \mathcal{H})\right) \leq 2 \left( \frac{4m (b_2 - b_1)^2}{\epsilon^2} \right)^{\kappa \log\frac{2em(b_2 - b_1)}{\kappa \epsilon}}
\end{equation*}
is taken over $m$ samples $\mathbf{S} \in \mathcal{X}^m$ drawn in accordance with the distribution $\mathcal{P}_\mathcal{X}$.
\end{lemma}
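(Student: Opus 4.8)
The plan is to prove the bound by the classical discretization-plus-combinatorics route of Alon, Ben-David, Cesa-Bianchi, and Haussler, which is the mechanism underlying Vapnik's structural-risk framework. Since the right-hand side is deterministic and does not depend on the drawn sample, it suffices to establish the inequality $\mathfrak{N}(\epsilon,\mathcal{H}) \leq 2\bigl(\tfrac{4m(b_2-b_1)^2}{\epsilon^2}\bigr)^{\kappa\log\frac{2em(b_2-b_1)}{\kappa\epsilon}}$ uniformly for every realization $\mathbf{S}\in\mathcal{X}^m$; the stated bound on $\mathbb{E}_{\mathcal{X}^m}(\mathfrak{N}(\epsilon,\mathcal{H}))$ then follows at once by monotonicity of the expectation. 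I would measure the covering number on the sample in the $\ell_\infty$ metric $d_\infty(h,h')=\max_{i\leq m}|h(\mathbf{x}_i)-h'(\mathbf{x}_i)|$, for which the $\epsilon^{-2}$ dependence in the base is exact.

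First I would discretize the range. Partitioning $[b_1,b_2]$ into consecutive cells of width $\epsilon/2$ yields roughly $b=2(b_2-b_1)/\epsilon$ levels, and the fingerprint map $Q(h)=(\lfloor 2h(\mathbf{x}_i)/\epsilon\rfloor)_{i=1}^m$ sends each hypothesis into $\{0,\dots,b\}^m$. Two functions sharing a fingerprint agree to within $\epsilon/2<\epsilon$ on every $\mathbf{x}_i$, so choosing one representative per distinct fingerprint produces an $\epsilon$-cover of $\mathcal{H}|_\mathbf{S}$; hence $\mathfrak{N}(\epsilon,\mathcal{H})\leq |Q(\mathcal{H})|_\mathbf{S}|$, reducing the task to counting distinct fingerprints of a finite-valued class on $m$ points. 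This choice of $b$ already explains the base of the bound, since $mb^2=4m(b_2-b_1)^2/\epsilon^2$.

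The core step is the combinatorial estimate $|Q(\mathcal{H})|_\mathbf{S}|\leq 2(mb^2)^{\kappa\log(emb/\kappa)}$ with $\kappa=\text{fat}_\mathcal{H}(\epsilon/4)$; substituting $b=2(b_2-b_1)/\epsilon$ reproduces both the base $4m(b_2-b_1)^2/\epsilon^2$ and the exponent $\kappa\log\frac{2em(b_2-b_1)}{\kappa\epsilon}$ exactly, including the leading factor $2$. The distinctive feature here is the \emph{double-logarithmic} exponent $\kappa\log(\cdot)\cdot\log(\cdot)$, which is genuine and must not be confused with the single-power count $\binom{m}{\leq\kappa}b^\kappa$ of a naive Sauer--Shelah argument: because bounding fat-shattering at the single scale $\epsilon/4$ is a much weaker hypothesis than bounding the pseudo-dimension, controlling the fingerprint count costs an extra logarithmic factor. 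The scale $\epsilon/4$ itself is forced by the reduction, since an $\epsilon/2$-separated family of quantized fingerprints translates into a genuinely $(\epsilon/4)$-fat-shattered subset of $\mathcal{H}$ once the factor-two losses in the discrete shattering definition are accounted for.

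The main obstacle is this combinatorial lemma rather than the surrounding bookkeeping. The cleanest route is the probabilistic level-collapsing argument: randomly map the $b+1$ levels down to a binary alphabet about a random threshold, apply the ordinary Sauer--Shelah lemma to the induced $\{0,1\}$-valued class, and show that large fingerprint-diversity forces, with positive probability, a binary shattered set that lifts back to an $(\epsilon/4)$-fat-shattered set of the original class, contradicting $\text{fat}_\mathcal{H}(\epsilon/4)=\kappa$ once the class is too rich. Iterating this threshold argument across the $\log b$ scales is exactly what manufactures the second logarithm in the exponent. The remaining delicacy is pure constant-tracking: reconciling the width-$\epsilon/2$ quantization, the $\epsilon/4$ shattering scale, and the $[b_1,b_2]$ rescaling so that the constants $2$, $4$, and $2e$ emerge precisely as stated, after which the passage to the expectation is immediate.
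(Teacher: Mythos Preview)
The paper does not supply its own proof of this lemma: it is stated in Section~5.1 as a preliminary result imported verbatim from the cited reference~\cite{SRM:98}, alongside the symmetrization lemma and the fat-shattering dimension bound, and is then invoked as a black box inside the proof of Lemma~\ref{tm:serb}. There is therefore no in-paper argument to compare your proposal against.

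That said, your sketch is the standard Alon--Ben-David--Cesa-Bianchi--Haussler route and is structurally sound: quantize the range at width $\epsilon/2$, reduce the covering number to a fingerprint count, and control the fingerprint count via the combinatorial lemma that trades a fat-shattering bound at scale $\epsilon/4$ for the characteristic double-logarithmic exponent. Your identification of the random-threshold collapsing argument as the source of the second $\log$ factor is correct, and your accounting for the constants ($b=2(b_2-b_1)/\epsilon$, hence base $mb^2=4m(b_2-b_1)^2/\epsilon^2$ and exponent $\kappa\log(emb/\kappa)=\kappa\log\frac{2em(b_2-b_1)}{\kappa\epsilon}$) lines up with the stated form. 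The one place to be careful is that the uniform-in-sample bound you assert does require the $\ell_\infty$ empirical metric; if the paper or the cited source intends a weaker metric the constants can shift, but for the form stated here your plan is the canonical one.
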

\begin{lemma}[Symmetrization~\cite{ED:06}]\label{lm:sm}
Let $\mathcal{H}$ be a class of real-valued functions and let $\mathcal{P}_\mathcal{X}$ be a probability measure on $\mathcal{X}$. Let $\mathbf{S}$ and $\bar{\mathbf{S}}$ each contain $m$ samples, both drawn independently according to $\mathcal{P}_\mathcal{X}$. If $m > 2 / \epsilon$, then we have
\begin{equation*}
\begin{aligned}
& \mathcal{P}_{\mathbf{S}} \left( \sup_{h \in \mathcal{H}} \vert \mathcal{L}_{\mathbf{S}}[h] - \mathcal{L}_{\mathcal{P}_{\mathcal{S}}}[h]\vert \geq \varepsilon\right) \\
\leq & \mathcal{P}_{\mathbf{S}\bar{\mathbf{S}}} \left( \sup_{h \in \mathcal{H}} \vert \mathcal{L}_{\mathbf{S}}[h] - \mathcal{L}_{\bar{\mathbf{S}}}[h]\vert \geq \varepsilon / 2\right).
\end{aligned}
\end{equation*}
\end{lemma}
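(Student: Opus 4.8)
The plan is to prove this by the classical ``ghost-sample'' (second-sample) symmetrization device, in which the unobservable true risk $\mathcal{L}_{\mathcal{P}_{\mathcal{S}}}[h]$ is replaced by the empirical risk $\mathcal{L}_{\bar{\mathbf{S}}}[h]$ computed on an independent copy $\bar{\mathbf{S}}$ of the training sample. Write $A$ for the first-sample event $\{\sup_{h \in \mathcal{H}} |\mathcal{L}_{\mathbf{S}}[h] - \mathcal{L}_{\mathcal{P}_{\mathcal{S}}}[h]| \geq \varepsilon\}$, which is a function of $\mathbf{S}$ alone. First I would fix, on the event $A$, a hypothesis $h^{\ast} = h^{\ast}(\mathbf{S}) \in \mathcal{H}$ that attains (or approaches to within an arbitrarily small slack) the supremum, so that $|\mathcal{L}_{\mathbf{S}}[h^{\ast}] - \mathcal{L}_{\mathcal{P}_{\mathcal{S}}}[h^{\ast}]| \geq \varepsilon$. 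The crucial feature is that $h^{\ast}$ depends on $\mathbf{S}$ only; hence, conditionally on $\mathbf{S}$, the ghost sample $\bar{\mathbf{S}}$ remains i.i.d.\ from $\mathcal{P}_{\mathcal{S}}$ and independent of the now-fixed $h^{\ast}$.

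Next I would control how far the ghost-sample risk of this fixed hypothesis can stray from its true risk. Because $l(h^{\ast}(\mathbf{x}),c) = \mathbf{I}[h^{\ast}(\mathbf{x}) = c]$ is a $0$-$1$ loss, $\mathcal{L}_{\bar{\mathbf{S}}}[h^{\ast}]$ is an average of $m$ i.i.d.\ terms lying in $[0,1]$, with conditional mean $\mathcal{L}_{\mathcal{P}_{\mathcal{S}}}[h^{\ast}]$ and per-term variance at most $1/4$, so its conditional variance is at most $1/(4m)$. Chebyshev's inequality applied conditionally on $\mathbf{S}$ then gives
\begin{equation*}
\mathcal{P}_{\bar{\mathbf{S}}}\!\left( |\mathcal{L}_{\bar{\mathbf{S}}}[h^{\ast}] - \mathcal{L}_{\mathcal{P}_{\mathcal{S}}}[h^{\ast}]| \geq \tfrac{\varepsilon}{2} \,\big|\, \mathbf{S} \right) \leq \frac{1}{m \varepsilon^{2}},
\end{equation*}
and the sample-size hypothesis on $m$ is exactly what forces this tail below $\tfrac{1}{2}$, so that the complementary ``ghost-close'' event has conditional probability at least $\tfrac{1}{2}$.

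I would then glue the two pieces with the triangle inequality. On the intersection of $A$ with the ghost-close event, the bounds $|\mathcal{L}_{\mathbf{S}}[h^{\ast}] - \mathcal{L}_{\mathcal{P}_{\mathcal{S}}}[h^{\ast}]| \geq \varepsilon$ and $|\mathcal{L}_{\bar{\mathbf{S}}}[h^{\ast}] - \mathcal{L}_{\mathcal{P}_{\mathcal{S}}}[h^{\ast}]| < \tfrac{\varepsilon}{2}$ combine to give $|\mathcal{L}_{\mathbf{S}}[h^{\ast}] - \mathcal{L}_{\bar{\mathbf{S}}}[h^{\ast}]| \geq \tfrac{\varepsilon}{2}$, and a fortiori $\sup_{h \in \mathcal{H}} |\mathcal{L}_{\mathbf{S}}[h] - \mathcal{L}_{\bar{\mathbf{S}}}[h]| \geq \tfrac{\varepsilon}{2}$. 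Taking the expectation over $\mathbf{S}$ of $\mathbf{1}_{A}$ times the conditional ghost-close probability, and invoking the lower bound $\tfrac{1}{2}$ from the previous step, yields
\begin{equation*}
\mathcal{P}_{\mathbf{S}\bar{\mathbf{S}}}\!\left( \sup_{h \in \mathcal{H}} |\mathcal{L}_{\mathbf{S}}[h] - \mathcal{L}_{\bar{\mathbf{S}}}[h]| \geq \tfrac{\varepsilon}{2} \right) \geq \tfrac{1}{2}\, \mathcal{P}_{\mathbf{S}}(A).
\end{equation*}
Rearranging this inequality delivers the symmetrization bound relating the two probabilities, the universal constant arising solely from the $\tfrac{1}{2}$ lower bound on the ghost concentration and the sample-size condition entering only through the Chebyshev tail.

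The step I expect to be the main obstacle is the rigorous treatment of the supremum: since $h^{\ast}$ is selected as a function of $\mathbf{S}$, I must secure both its measurability, so that $h^{\ast}(\mathbf{S})$ is a legitimate random hypothesis, and its independence from $\bar{\mathbf{S}}$, because this conditional independence is precisely what licenses applying Chebyshev to the sample-dependent $h^{\ast}$. When $\mathcal{H}$ is uncountable the supremum need not be attained, so I would handle this either by a measurable-selection argument that chooses $h^{\ast}$ within an arbitrarily small slack of the supremum and then passes to the limit, or by restricting to a countable dense subclass of $\mathcal{H}$; both routes preserve every inequality above while keeping all selections measurable with respect to $\mathbf{S}$ alone.
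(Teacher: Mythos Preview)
The paper does not actually prove this lemma; it is stated as a preliminary result imported from the cited reference and used as a black box in the proof of \cref{tm:erb}. There is therefore no ``paper's own proof'' to compare against.

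Your argument is the classical ghost-sample symmetrization and is the standard route. Two small mismatches with the lemma \emph{as printed} are worth flagging. First, the Chebyshev step you invoke needs $m\varepsilon^{2}>2$ to drive the conditional tail $1/(m\varepsilon^{2})$ below $\tfrac{1}{2}$; the hypothesis $m>2/\varepsilon$ written in the paper does not by itself yield this unless $\varepsilon\geq 1$, so your sentence ``the sample-size hypothesis on $m$ is exactly what forces this tail below $\tfrac{1}{2}$'' is not literally justified by the condition as stated. Second, your argument produces
\[
\mathcal{P}_{\mathbf{S}}(A)\;\leq\;2\,\mathcal{P}_{\mathbf{S}\bar{\mathbf{S}}}\!\left(\sup_{h\in\mathcal{H}}|\mathcal{L}_{\mathbf{S}}[h]-\mathcal{L}_{\bar{\mathbf{S}}}[h]|\geq \tfrac{\varepsilon}{2}\right),
\]
with an unavoidable factor of $2$ coming from the $\tfrac{1}{2}$ lower bound on the ghost-close probability; the displayed statement in the paper omits this constant. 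Both discrepancies are almost certainly transcription slips in the paper's restatement of the cited lemma rather than gaps in your reasoning: the standard symmetrization inequality carries exactly the condition $m\varepsilon^{2}\geq 2$ and the factor $2$, and your proof recovers that version correctly.
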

\begin{lemma}[Fat Shattering Dimension Bound~\cite{GZ:99}]\label{lm:fb}
Suppose $\mathcal{H}$ is confined to points within an $n$-dimensional ball of radius $R$ centered at the origin. Then, we have
\begin{equation*}
\text{fat}_{\mathcal{H}}(\gamma) \leq \min \left \{ \frac{R^2}{\gamma^2} , n + 1 \right\}.
\end{equation*}
\end{lemma}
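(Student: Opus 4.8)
The plan is to prove the two constituents of the minimum separately, since the bound $\text{fat}_{\mathcal{H}}(\gamma)\le\min\{R^2/\gamma^2,\,n+1\}$ follows once each of $\text{fat}_{\mathcal{H}}(\gamma)\le R^2/\gamma^2$ and $\text{fat}_{\mathcal{H}}(\gamma)\le n+1$ is established. Throughout I model $\mathcal{H}$ as a class of bounded linear functionals $h(\mathbf{x})=\langle\mathbf{w},\mathbf{x}\rangle$ whose weight vectors are the points lying in the $n$-dimensional ball $\{\mathbf{w}:\|\mathbf{w}\|\le R\}$, acting on normalized inputs with $\|\mathbf{x}\|\le 1$; only the product of the two radii enters the argument, so the estimate is insensitive to how the radius is split between the weights and the inputs. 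Starting from \cref{def:func}, I assume that $\mathbf{x}_1,\dots,\mathbf{x}_d$ are $\gamma$-shattered with witness levels $r_1,\dots,r_d$, so that for every sign pattern $\mathbf{b}\in\{-1,+1\}^d$ there is a weight vector $\mathbf{w}_{\mathbf{b}}$ in the ball satisfying $b_i\bigl(\langle\mathbf{w}_{\mathbf{b}},\mathbf{x}_i\rangle-r_i\bigr)\ge\gamma$ for all $i$; the objective is to upper-bound the shattered cardinality $d$.

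For the scale-sensitive term $R^2/\gamma^2$ I would run a Rademacher averaging argument. Summing the shattering inequality over $i$ gives, for each fixed $\mathbf{b}$, the lower bound $d\gamma\le\sum_i b_i\bigl(\langle\mathbf{w}_{\mathbf{b}},\mathbf{x}_i\rangle-r_i\bigr)=\bigl\langle\mathbf{w}_{\mathbf{b}},\sum_i b_i\mathbf{x}_i\bigr\rangle-\sum_i b_i r_i$, and applying Cauchy--Schwarz together with $\|\mathbf{w}_{\mathbf{b}}\|\le R$ yields $d\gamma\le R\,\bigl\|\sum_i b_i\mathbf{x}_i\bigr\|-\sum_i b_i r_i$. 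I would then draw $\mathbf{b}$ uniformly from $\{-1,+1\}^d$ and take expectations: the threshold term disappears because $\mathbb{E}_{\mathbf{b}}[b_i]=0$, while Jensen's inequality followed by the orthogonality $\mathbb{E}_{\mathbf{b}}[b_ib_j]=\delta_{ij}$ gives $\mathbb{E}_{\mathbf{b}}\bigl\|\sum_i b_i\mathbf{x}_i\bigr\|\le\bigl(\sum_i\|\mathbf{x}_i\|^2\bigr)^{1/2}\le\sqrt{d}$. Combining these produces $d\gamma\le R\sqrt{d}$, hence $d\le R^2/\gamma^2$.

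For the dimensional term $n+1$ I would use that shattering with a strictly positive margin $\gamma>0$ forces the corresponding sign, since $b_i\bigl(\langle\mathbf{w}_{\mathbf{b}},\mathbf{x}_i\rangle-r_i\bigr)\ge\gamma>0$ implies $\mathrm{sign}\bigl(\langle\mathbf{w}_{\mathbf{b}},\mathbf{x}_i\rangle-r_i\bigr)=b_i$; with the thresholds $r_i$ held fixed, this is exactly pseudo-shattering of the $d$ points. The fat-shattering dimension at any positive scale is therefore bounded by the pseudo-dimension of $\mathcal{H}$, and since the pseudo-dimension of a $D$-dimensional vector space of real-valued functions equals $D$, the linear functionals on $\mathbb{R}^n$ have pseudo-dimension at most $n$, which rises to at most $n+1$ once a constant offset is admitted. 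Hence $\text{fat}_{\mathcal{H}}(\gamma)\le n+1$, and taking the smaller of the two bounds yields the claimed minimum.

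The main obstacle is the scale-sensitive estimate rather than the dimensional one. The delicate point is the passage from the per-pattern inequality $d\gamma\le R\,\bigl\|\sum_i b_i\mathbf{x}_i\bigr\|-\sum_i b_i r_i$ to a clean scalar bound by averaging over $\mathbf{b}$: one must verify that the witness thresholds $r_i$ contribute nothing in expectation and that the randomized sum concentrates precisely at the $\sqrt{d}$ scale through the pairwise orthogonality of the signs, so that no residual dependence on the $r_i$ or on the geometry of the $\mathbf{x}_i$ survives. A secondary point of care is confirming that the normalization convention enters only through the product of the two radii, so that the bound reproduces $R^2/\gamma^2$ exactly regardless of whether the radius constraint is placed on the weights or on the inputs.
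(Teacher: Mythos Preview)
The paper does not prove this lemma at all: it is stated in Section~5.1 as a preliminary result imported verbatim from the cited reference~[GZ:99], alongside the covering-number and symmetrization lemmas, and is then simply invoked inside the proof of \cref{tm:erb}. So there is no ``paper's own proof'' to compare against.

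That said, your proposal is correct and is essentially the classical argument behind the cited result. The Rademacher averaging step for the $R^2/\gamma^2$ term is the standard Gurvits/Bartlett--Shawe-Taylor proof: summing the shattering inequalities, applying Cauchy--Schwarz with $\|\mathbf{w}_{\mathbf{b}}\|\le R$, and then averaging over uniform $\mathbf{b}$ so that $\sum_i b_i r_i$ vanishes and $\mathbb{E}\bigl\|\sum_i b_i\mathbf{x}_i\bigr\|\le\sqrt{d}$ via Jensen and $\mathbb{E}[b_ib_j]=\delta_{ij}$ is exactly how the literature obtains $d\le R^2/\gamma^2$. The $n+1$ term via the chain $\text{fat}_{\mathcal{H}}(\gamma)\le\mathrm{Pdim}(\mathcal{H})\le n+1$ for affine functionals is likewise standard. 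Your caveats about the radius normalization and the vanishing of the threshold contribution are the right places to be careful, and your handling of both is sound.
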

With the definitions and lemmas outlined above, we have established the necessary theoretical groundwork to analyze the complexity of the hypothesis space and the generalization error bound.
\subsection{Supporting Lemmas}
Building upon the aforementioned definitions and lemmas, we propose the subsequent theorem, which provides an empirical risk bound for a hypothesis \( h \in \mathcal{H} \). This bound is applicable to a dataset comprising both ID and OOD samples, under the condition that \( h \) can perfectly classify samples within the training dataset.
\begin{lemma}\label{tm:serb}
Consider a hypothesis \( h \in \mathcal{H} \) that maps \( \mathcal{X} \) to \( \mathbb{R} \) and possesses margins \( \{\gamma_t\}_{t = 0}^T \) on the dataset \( \mathbf{S} = \{\mathbf{S}_t \}_{t = 0}^T \). Suppose the finite fat-shattering dimension of each margin is bounded by the function \( \kappa_t = \text{fat}(\gamma_t / 8) \), which is continuous from the right. Given two distinct datasets \( \mathbf{S} \) and \( \bar{\mathbf{S}} \) consisting of \( M = N(T + 1) \) synthesized samples and for any \( \delta > 0 \), we obtain
\begin{equation*}
\mathcal{P}^{2M}\left\{ \mathbf{S} \bar{\mathbf{S}}: \exists h \in \mathcal{H}, \mathcal{L}_{\mathbf{S}}[h] = 0, \kappa_{0:T}, \mathcal{L}_{\bar{\mathbf{S}}}[h] > \varepsilon_{\bar{\mathbf{S}}} \right\} < \delta,
\end{equation*}
where
\begin{equation*}
\varepsilon_{\bar{\mathbf{S}}} = \frac{ \log \left( 32M\right) \sum_{t = 0}^T \kappa_t \log (8eM)}{M}  + \frac{1}{M}\log \frac{2}{\delta}.
\end{equation*}
\end{lemma}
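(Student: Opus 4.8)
The plan is to prove \cref{tm:serb} along the classical two-sample route for zero-training-error margin bounds, modified to accommodate the block structure $\mathbf{S} = \bigcup_{t=0}^{T}\mathbf{S}_t$. Fix the double sample $\mathbf{S}\bar{\mathbf{S}}$ of $2M$ points. Since all $2M$ coordinates are i.i.d., the measure $\mathcal{P}^{2M}$ is invariant under the group $\Gamma$ of the $M$ pairwise swaps that exchange the $i$-th point of $\mathbf{S}$ with the $i$-th point of $\bar{\mathbf{S}}$; consequently it suffices to bound, for a fixed $2M$-point configuration and a uniformly random $\sigma\in\Gamma$, the probability that some $h\in\mathcal{H}$ attains all of its margins $\gamma_t$ on the (permuted) $\mathbf{S}$-side of every block while misclassifying more than $\varepsilon_{\bar{\mathbf{S}}}M$ of the $\bar{\mathbf{S}}$-side points.

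Next I would replace $\mathcal{H}$, restricted to these $2M$ points, by a single finite cover that respects all $T+1$ margin scales simultaneously. For each block $t$ take a $\gamma_t/2$-cover of the restriction of $\mathcal{H}$ to that block, so that any $h$ with margin $\gamma_t$ there is matched by a cover element with relaxed margin $\ge\gamma_t/2$ — this is the standard margin-halving step, and it both preserves ``correct on $\mathbf{S}$'' and keeps each $\bar{\mathbf{S}}$-error of $h$ a detectable relaxed-margin error of the cover element. By \cref{lm:cover} applied with range $[0,1]$, $2M$ points, accuracy $\gamma_t/2$, and $\text{fat}(\gamma_t/8)=\kappa_t$, the expected size of the $t$-th cover is at most $2\bigl(32M/\gamma_t^2\bigr)^{\kappa_t\log(8eM/(\kappa_t\gamma_t))}$; using the crude bound $\gamma_t=\zeta-r_t\ge\zeta-1$ with $\zeta\ge1$ and discarding additive lower-order terms (this is where the ``approximate'' qualifier originates) makes this at most $\exp\!\bigl(\kappa_t\log(8eM)\log(32M)\bigr)$ up to constants. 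Forming the product cover over $t=0,\dots,T$ gives total log-cardinality at most $\log(32M)\sum_{t=0}^{T}\kappa_t\log(8eM)$, which is the numerator of $\varepsilon_{\bar{\mathbf{S}}}$; a Markov step converts the expectation bound of \cref{lm:cover} into high probability and supplies the factor behind the $\log(2/\delta)$ term.

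Finally I would run the permutation count: for a fixed cover element $\hat h$ that is correct with relaxed margin on the whole $\mathbf{S}$-side, any $\bar{\mathbf{S}}$-error must sit in a swap-pair whose other member is error-free, so the $k\ge\varepsilon_{\bar{\mathbf{S}}}M$ errors lie in distinct pairs and the chance that $\sigma$ keeps all of them on the $\bar{\mathbf{S}}$-side is $\le2^{-k}\le2^{-\varepsilon_{\bar{\mathbf{S}}}M}$. A union bound over the product cover yields $\mathcal{P}^{2M}(\cdot)\le|\text{cover}|\cdot2^{-\varepsilon_{\bar{\mathbf{S}}}M}$, and setting this $\le\delta$, i.e.\ $\varepsilon_{\bar{\mathbf{S}}}M\ge\log|\text{cover}|+\log(2/\delta)$ (absorbing $\ln2$ into the approximation), gives exactly the claimed $\varepsilon_{\bar{\mathbf{S}}}$. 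The main obstacle I expect is the block-structure bookkeeping: showing that a single $h$ meeting the \emph{prescribed} margin $\gamma_t$ on every slice is well-approximated at every scale by one product-cover element while that element stays correct on $\mathbf{S}$ and detectably wrong on $\bar{\mathbf{S}}$, and ensuring the erring points that feed the $2^{-k}$ estimate occupy distinct swap-pairs; taming the block-wise covering exponents down to the clean $\log(32M)$ and $\log(8eM)$ factors (via the crude lower bound on $\gamma_t$ and dropping $(T+1)\log2$-type terms) is exactly what forces the bound to be stated approximately, whereas the structural argument is otherwise routine.
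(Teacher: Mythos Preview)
Your overall route---symmetrization to a fixed double sample with uniform pairwise swaps, cover the class on that sample, the $2^{-M\varepsilon}$ permutation count per cover element, then a union bound---is the paper's route. Your product-cover-over-blocks packaging differs slightly: the paper instead bounds $\mathcal{P}\bigl(\bigcup_{t}\mathcal{Z}_t\bigr)\le\sum_t\mathbb{E}|\mathcal{B}_t|\,2^{-M\varepsilon_{\bar{\mathbf{S}}}}$ with one cover $\mathcal{B}_t$ per block, and only afterwards crudely replaces the sum by $2(32M)^{\sum_t\kappa_t\log(8eM)}$ to get the $\sum_t\kappa_t$ exponent. Both reach the same place, so this difference is cosmetic.

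The substantive gap is in your covering-number estimate. Applying \cref{lm:cover} with range $[0,1]$ and accuracy $\gamma_t/2$ leaves $\gamma_t^{-2}$ in the base and $\gamma_t^{-1}$ inside the logarithmic exponent, and your ``crude bound $\gamma_t\ge\zeta-1$'' does not remove them: $\zeta\ge1$ only yields $\gamma_t\ge0$, and the lemma's $\varepsilon_{\bar{\mathbf{S}}}$ is stated \emph{exactly} with no $\gamma_t$ in it, so the ``approximate'' escape hatch belongs to \cref{co:a}, not here. The paper eliminates the $\gamma_t$-dependence with two devices you omit. First, the transform $\hat h(\mathbf{x},c)=(2\zeta-h(\mathbf{x}))(1-c)+h(\mathbf{x})c$ converts ``$h$ has margin on $\mathbf{S}_t$ and errs on a point of $\bar{\mathbf{S}}$'' into the one-sided statement ``$\hat h\le r_t$ on $\mathbf{Z}_t$ and $\hat h\ge r_t+2\hat\gamma_t$ on that erring point''. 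Second, and crucially, a clipping map $\pi_t$ squashes $\hat h$ into the interval $[\zeta-2\hat\gamma_t,\zeta]$ before covering. With $b_2-b_1=2\hat\gamma_t$ and $\epsilon=\hat\gamma_t$ in \cref{lm:cover} the $\gamma_t$'s cancel identically, giving the clean $2(32M)^{\kappa_t\log(8eM)}$ that feeds into the stated $\varepsilon_{\bar{\mathbf{S}}}$. Without this range-reduction step your covering bound is too loose to recover the lemma as written.
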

\begin{proof}
According to the standard permutation argument~\cite{vapnik2015uniform}, the probability can be bounded by the fixed sequence $\mathbf{S} \bar{\mathbf{S}}$ and its corresponding permuted sequence. For datasets $\mathbf{S}_t = \{\mathbf{x}_{i,t} \}_{i = 1}^N \subseteq \mathbf{S}$ and $\bar{\mathbf{S}}_t = \{\bar{\mathbf{x}}_{i,t} \}_{i = 1}^N \subseteq \bar{\mathbf{S}}$ where $t \in [0,T]$, we define their corresponding datasets with the second component determined by the target value of the first component, i.e.,
\begin{equation*}
\begin{aligned}
\mathbf{Z}_t & = \{ (\mathbf{x}_{i,t}, c_{i,t}) \}_{i = 1}^N, \mathbf{Z} = \bigcup_{t = 0}^T \mathbf{Z}_t, \\
\overline{\mathbf{Z}}_t & = \{ (\overline{\mathbf{x}}_{i,t}, \overline{c}_{i,t}) \}_{i = 1}^N, \widehat{\mathbf{Z}} = \bigcup_{t = 0}^T \widehat{\mathbf{Z}}_t.
\end{aligned}
\end{equation*}
Accordingly, for a hypothesis $h \in \mathcal{H}$, we transform the problem of observing the maximal value taken by a set of functions by considering its corresponding function $\widehat{h} \in \widehat{\mathcal{H}}$ for any $\zeta \geq 1$,
\begin{equation*}
h(\mathbf{x}) \mapsto \hat{h}(\mathbf{x}, c) = (2 \zeta - h(\mathbf{x}))(1 - c) + h(\mathbf{x})c.
\end{equation*}
For $\mathbf{z}_t$, we define that $r_t = \max_{(\mathbf{x}, c) \in \mathbf{z}_t} \widehat{h}(\mathbf{x}_{i,t}, c_i)$. Accordingly, at least $N\varepsilon_{\bar{\mathbf{S}}}$ samples $(\overline{\mathbf{x}}, \overline{c}) \in \overline{\mathbf{z}}_t$ satisfy
\begin{equation*}
\widehat{h}(\bar{\mathbf{x}}, \bar{c}) \geq r_t + 2 \widehat{\gamma}_t.
\end{equation*}
Let $\gamma_t = \min \{ \gamma_t' : \text{fat}(\gamma_t' / 4) \leq \kappa_t \}$, we have $\gamma_t \leq \widehat{\gamma}_t$. Without loss of generality, we assume $\gamma_t = 2 \widehat{\gamma}_t$ and $\zeta = r_t + 2 \widehat{\gamma}_t$ according to \cref{def:func}.
For $t \in [0,T]$, we define the following probability event
\begin{equation*}
\begin{aligned}
& \mathcal{Z}_t =  \left\{ \mathbf{S} \bar{\mathbf{S}}: \exists h \in \mathcal{H}, \mathcal{L}_{\mathbf{S}}[h] = 0, \mathcal{C}_t^1, \mathcal{C}_t^2, \mathcal{C}_t^3 \right\}, \\
& \mathcal{C}_t^1 : r_t = \max_{(\mathbf{x}, c) \in \mathbf{Z}_t} \widehat{h}(\mathbf{x}, c),\\
& \mathcal{C}_t^2 : \zeta = r_t + 2 \widehat{\gamma}_t, \\
& \mathcal{C}_t^3 : \vert \{ (\bar{\mathbf{x}}, \bar{c}) \in \overline{\mathbf{Z}}: \widehat{h}(\bar{\mathbf{x}}, \bar{c}) \geq 2 \widehat{\gamma}_t + r_t \} \vert >  M \varepsilon_{\bar{\mathbf{S}}}.\\
\end{aligned}
\end{equation*}
and the following auxiliary function
\begin{equation*}
\pi_t(\hat{h}) =
\left \{
\begin{array}{ll}
  \zeta & \text{if} \, \hat{h} \geq \zeta\\
  \zeta - 2 \widehat{\gamma}_t & \text{if} \, \hat{h} \leq \zeta - 2 \widehat{\gamma}_t\\
  \hat{h} & \text{otherwise}
\end{array} \right.,
\end{equation*}
and let $\pi_t(\hat{\mathcal{H}}) = \left\{ \pi_t(\hat{h}): \hat{h} \in \mathcal{H}\right\}$. Consider the~\cref{def:cover} and a minimal $\widehat{\gamma}_t$-cover $\mathcal{B}_t$ of $\pi_t(\hat{\mathcal{H}})$, we have that for any $\widehat{h} \in \hat{\mathcal{H}}$, there exists $\widehat{h}_{\mathcal{B}_t} \in \hat{\mathcal{H}}_{\mathcal{B}_t}$, with
\begin{equation*}
\vert \pi_t(\hat{h}(\mathbf{x}, c)) - \pi_t(\widehat{h}_{\mathcal{B}_t}(\mathbf{x}, c)) \vert < \widehat{\gamma}_t, \forall (\mathbf{x}, c) \in \mathbf{Z}_t \cup \bar{\mathbf{Z}}_t.
\end{equation*}
Therefore, according to the definition of $r_t$, for all $(\mathbf{x}, c) \in \mathbf{Z}_t$, we have
\begin{equation*}
\begin{aligned}
\hat{h}(\mathbf{x}, c) \leq r_t &  = \zeta - 2\hat{\gamma}_t, \\
\pi_t(\hat{h}(\mathbf{x}, c)) & = \zeta - 2\hat{\gamma}_t, \\
\pi_t(\widehat{h}_{\mathcal{B}_t}(\mathbf{x}, c)) & \leq \zeta - \hat{\gamma}_t.
\end{aligned}
\end{equation*}
Therefore, there are at least $M\varepsilon_{\bar{\mathbf{S}}}$ samples $(\bar{\mathbf{x}}, \bar{c}) \in \overline{\mathbf{Z}}$ such that
\begin{equation*}
\begin{aligned}
\widehat{h}(\bar{\mathbf{x}}, \bar{c}) & \geq \zeta = r_t + 2 \widehat{\gamma}_t, \\
\pi_t(\widehat{h}_{\mathcal{B}_t}(\bar{\mathbf{x}}, \bar{c}) & \geq \zeta - \hat{\gamma}_t \geq \max_{(\mathbf{x}, c) \in \mathbf{Z}_t} \pi_t(\widehat{h}_{\mathcal{B}_t}(\mathbf{x}, c)).
\end{aligned}
\end{equation*}
Since $\pi_t$ only reduces the separation between output values, we have
\begin{equation*}
\pi_t(\widehat{h}_{\mathcal{B}_t}(\bar{\mathbf{x}}, \bar{c}) > \pi_t(\widehat{h}_{\mathcal{B}_t}(\mathbf{x}, c)), \forall (\mathbf{x}, c) \in \mathbf{Z}_t, \forall (\bar{\mathbf{x}}, \bar{c}) \in \overline{\mathbf{Z}}.
\end{equation*}
According to the permutation argument, there are at most $2^{-M\varepsilon_{\bar{\mathbf{S}}}}$ of sequences obtained by swapping corresponding points satisfying conditions for a fixed $\widehat{h}_{\mathcal{B}_t} \in \hat{\mathcal{H}}_{\mathcal{B}_t}$. This is because the $M\varepsilon_{\bar{\mathbf{S}}}$ points with the largest $\widehat{h}_{\mathcal{B}_t}$ values must remain for the inequality occur. Therefore, for any $h \in \mathcal{H}$, there are at least $\mathcal{B}_t$ hypothesis $\widehat{h}_{\mathcal{B}_t} \in \hat{\mathcal{H}}_{\mathcal{B}_t}$ satisfying the inequality for $\widehat{\gamma}_t$. Every set of points $\widehat{\gamma}_t$-shattered by $\pi_t(\hat{\mathcal{H}})$ can be $\widehat{\gamma}_t$-shattered by $\hat{\mathcal{H}}$, which indicates that $\text{fat}_{\pi_t(\hat{\mathcal{H}})} (\widehat{\gamma}_t) \leq \text{fat}_{\hat{\mathcal{H}}} (\widehat{\gamma}_t)$. Applying \cref{lm:cover} for $\pi_t(\hat{\mathcal{H}}) \in \left[\zeta - 2 \widehat{\gamma}_t , \zeta \right ]$, we obtain
\begin{equation*}
\begin{aligned}
\mathbb{E}_{\mathbf{Z}\overline{\mathbf{Z}}}(\vert \mathcal{B}_t\vert)
= & \mathbb{E}_{\mathbf{Z}\overline{\mathbf{Z}}}(\mathfrak{N}(\widehat{\gamma}_t, \pi_t(\hat{\mathcal{H}}))) \\
\leq & 2 (32M)^{\kappa_t \log \frac{8eM}{\kappa_t}} \leq 2 (32M)^{k_t \log (8eM)}.
\end{aligned}
\end{equation*}
According to the union bound, we have
\begin{equation*}
\begin{aligned}
& \mathcal{P}^{2M}\left\{ \mathbf{S} \bar{\mathbf{S}}: \exists h \in \mathcal{H}, \mathcal{L}_{\mathbf{S}}[h] = 0, \kappa_{0:T}, \mathcal{L}_{\bar{\mathbf{S}}}[h] > \varepsilon_{\bar{\mathbf{S}}} \right\} \\
\leq & \mathcal{P}\left( \bigcup_{t = 0}^T \mathcal{Z}_t \right) \leq \sum_{t = 0}^T \mathcal{P}\left( \mathcal{Z}_t \right) \leq \sum_{t = 0}^T \mathbb{E}_{\mathbf{Z}\overline{\mathbf{Z}}}(\vert \mathcal{B}_t\vert) 2^{-M\varepsilon_{\bar{\mathbf{S}}}}\\
\leq & 2^{-M\varepsilon_{\bar{\mathbf{S}}}} \sum_{t = 0}^T 2 (32M)^{ \kappa_t \log (8eM)} \leq \delta.\\
\end{aligned}
\end{equation*}
According to the convex function properties and Jensen inequality, we have
\begin{equation*}
2^{-M\varepsilon_{\bar{\mathbf{S}}}}  2 (32M)^{ \sum_{t = 0}^T \kappa_t \log (8eM)} \leq \delta
\end{equation*}
Accordingly, the inequality holds if
\begin{equation*}
\varepsilon_{\bar{\mathbf{S}}} =  \frac{ \log \left( 32M \right) \sum_{t = 0}^T \kappa_t \log (8eM)}{M}  + \frac{1}{M}\log \frac{2}{\delta} .
\end{equation*}
\end{proof}
Drawing upon the empirical risk bound delineated in \cref{tm:serb}, we are positioned to derive the expected risk bound. This derivation is pertinent when there is a binary classifier at play, capable of classifying samples from the training dataset flawlessly. That is, achieving a zero empirical risk. The focus here is on establishing a bound on the generalization error, which is accomplished by uniformly bounding the probabilities across all conceivable margins.
\begin{lemma}\label{tm:erb}
Consider a hypothesis space \(\mathcal{H}\) restricted to a ball of radius \(R\). Let \(h \in \mathcal{H}\) be a hypothesis that accurately classifies \(M = N(T + 1)\) samples from \(\mathbf{S} \in \mathcal{P}\), with a margin of \(\gamma_t = \zeta - r_t\) and fat dimension \(\kappa_t = \text{fat}(\gamma_t / 8)\) for each dataset \(\mathbf{S}_t\) assigned to \(K\) classes. Here, \(\zeta \geq 1\), \(r_t = \max_{\mathbf{x} \in \mathbf{S}_t} h(\mathbf{x})\), and \(t \in [0, T]\). With a probability of at least \(1 - \delta\), the generalization error bound is given by:
\begin{equation*}
\mathcal{L}_{\mathcal{P}_{\mathcal{S}}}[h] \leq \frac{620 R^2 \log(32M)}{\sqrt{M^3} \sum_{t = 0}^T (\zeta - r_t)^2} + \frac{9}{\sqrt{N\delta}}.
\end{equation*}
\end{lemma}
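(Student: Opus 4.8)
The plan is to obtain \cref{tm:erb} by chaining the four preparatory results: specialize the double-sample bound of \cref{tm:serb}, turn it into a genuine generalization statement via symmetrization (\cref{lm:sm}), bound the per-stage fat-shattering dimensions using the radius-$R$ constraint (\cref{lm:fb}), and finally simplify the resulting expression into the stated closed form.

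First, I would apply \cref{tm:serb} to the double sample $\mathbf{S}\bar{\mathbf{S}}$ of size $2M = 2N(T+1)$: with probability at least $1-\delta$, no hypothesis $h\in\mathcal{H}$ with $\mathcal{L}_{\mathbf{S}}[h]=0$ has $\mathcal{L}_{\bar{\mathbf{S}}}[h]$ exceeding the quantity $\varepsilon_{\bar{\mathbf{S}}}$ displayed there, which is expressed through $\kappa_t = \text{fat}(\gamma_t/8)$ with $\gamma_t = \zeta - r_t$. I would then invoke \cref{lm:sm} in the usual direction: conditioned on $\mathcal{L}_{\mathbf{S}}[h]=0$, the event $\{\mathcal{L}_{\mathcal{P}_{\mathcal{S}}}[h]\ge\varepsilon\}$ is dominated by the double-sample event already controlled, once $\varepsilon$ is rescaled by a factor two and provided $M>2/\varepsilon$ (true in the regime of interest). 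This gives $\mathcal{L}_{\mathcal{P}_{\mathcal{S}}}[h]\le 2\varepsilon_{\bar{\mathbf{S}}}$ with high probability; the residual deviation of the ghost-sample risk from the true risk, handled by a Chebyshev-type estimate over the $N$ \emph{independent} synthesis chains (the $i$-indexed chains are independent even though the $t$-indexed stages within one chain are not), contributes the additive term $9/\sqrt{N\delta}$.

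Second, since $\mathcal{H}$ lies in a ball of radius $R$, \cref{lm:fb} yields $\kappa_t \le 64R^2/(\zeta - r_t)^2$, hence $\sum_{t=0}^{T}\kappa_t \le 64R^2\sum_{t=0}^{T}(\zeta - r_t)^{-2}$. Substituting into $\varepsilon_{\bar{\mathbf{S}}}$, bounding $\log(8eM)$ by a constant multiple of $\log(32M)$ and of $\sqrt{M}$ so that the powers of $M$ collect into $M^{3/2}$, and then replacing $\sum_t(\zeta - r_t)^{-2}$ by a constant multiple of $(\sum_t(\zeta - r_t)^2)^{-1}$ through a power-mean / Cauchy--Schwarz comparison over the $T+1$ stages, produces the first term $620R^2\log(32M)/(\sqrt{M^3}\sum_{t=0}^{T}(\zeta - r_t)^2)$.

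The hard part will be this final simplification: pinning down the numerical constant $620$, checking that the two logarithmic factors and the $\sqrt{M}$ powers really collapse as claimed, and --- most delicately --- justifying the passage from $\sum_t(\zeta - r_t)^{-2}$ to the reciprocal of $\sum_t(\zeta - r_t)^2$, which is precisely why \cref{co:a} is stated as an \emph{approximate} bound. A secondary point is combining the uniform margin bound and the ghost-sample fluctuation under a single failure probability $\delta$ by a union bound without double-counting, and verifying that the hypothesis $M>2/\varepsilon$ of \cref{lm:sm} holds for the $\varepsilon$ actually used.
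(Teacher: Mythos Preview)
Your overall chaining of \cref{tm:serb}, \cref{lm:sm}, and \cref{lm:fb} matches the paper, but there is a genuine gap in how you obtain the second term and, relatedly, in how you make the $\kappa_t$-dependent bound of \cref{tm:serb} applicable. The fat-shattering dimensions $\kappa_t=\text{fat}(\gamma_t/8)$ depend on $r_t=\max_{\mathbf x\in\mathbf S_t}h(\mathbf x)$, which is \emph{data-dependent}; the conclusion of \cref{tm:serb} holds only for a \emph{fixed} tuple $\kappa_{0:T}$, so before invoking it you must take a union bound over all admissible tuples. The paper does exactly this: after symmetrization it rewrites the bad event as $\bigcup_{\kappa_0=1}^{2N}\cdots\bigcup_{\kappa_T=1}^{2N} J(\kappa_{0:T})$ (each $\kappa_t$ can be at most $2N$ on the double sample), applies \cref{tm:serb} to each $J(\kappa_{0:T})$ with the shrunken confidence $\delta'=\delta/(2N)^{T+1}$, and it is precisely this $(2N)^{T+1}$ penalty that turns the $\tfrac{1}{M}\log\tfrac{2}{\delta}$ residual of \cref{tm:serb} into $\tfrac{C}{N}\log\tfrac{2N}{\delta}$ and then, via the elementary bound $\log x\le\sqrt{x}$, into $9/\sqrt{N\delta}$. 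Your Chebyshev-over-chains idea is not the mechanism at work: symmetrization already absorbs the ghost-sample fluctuation, so there is no separate ``residual'' to control, and without the union bound the observed (random) $\kappa_t$ are simply not covered by \cref{tm:serb}.

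On the step you single out as most delicate, you are right to be suspicious. After \cref{lm:fb} one obtains $\sum_t\kappa_t\lesssim R^2\sum_t(\zeta-r_t)^{-2}$, which sits in the \emph{numerator} of $\varepsilon_{\bar{\mathbf S}}$; the stated bound instead carries $\sum_t(\zeta-r_t)^{2}$ in the \emph{denominator}. These are not equivalent, and a Cauchy--Schwarz or power-mean comparison runs the wrong way ($\sum_t a_t^{-1}\ge (T{+}1)^2/\sum_t a_t$, not $\le$). The paper simply writes the denominator form without an intermediate justification and proceeds, then uses $\log(8eM)\le\sqrt{8eM}$ to absorb one logarithm into $\sqrt{M}$ and produce the constant $620$. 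So your instinct that this is where the ``approximate'' qualifier in \cref{co:a} lives is well placed; just do not expect a clean power-mean argument to close that gap rigorously.
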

\begin{proof}
The uniform convergence bound of the generalization error is defined as
\begin{equation*}
\begin{aligned}
& \mathcal{P}_{\mathbf{S}} \left( \sup_{h \in \mathcal{H}} \mathcal{L}_{\mathbf{S}}[h] - \mathcal{L}_{\mathcal{P}_{\mathcal{S}}}[h] \geq \varepsilon_{\mathcal{P}}\right) \\
\leq & \mathcal{P}_{\mathbf{S}\bar{\mathbf{S}}} \left( \sup_{h \in \mathcal{H}} \vert \mathcal{L}_{\mathbf{S}}[h] - \mathcal{L}_{\bar{\mathbf{S}}}[h]\vert \geq \varepsilon_{\mathcal{P}} / 2\right)\\
\leq & \mathcal{P}^{2M}\left( \bigcup_{\kappa_0 = 1}^{2N} \cdots \bigcup_{\kappa_T = 1}^{2N} J\left(\kappa_{0:T} \right) \right) \\
\leq & \sum_{\kappa_0 = 1}^{2N} \cdots \sum_{\kappa_T = 1}^{2N} \mathcal{P}^{2M} J\left(\kappa_{0:T} \right),
\end{aligned}
\end{equation*}
where $J(\kappa_{0:T})$ is defined as
\begin{equation*}
\begin{aligned}
\mathcal{P}^{2M}\left\{ \mathbf{S} \bar{\mathbf{S}}: \exists h \in \mathcal{H}, \mathcal{L}_{\mathbf{S}}[h] = 0, \kappa_{0:T}, \mathcal{L}_{\bar{\mathbf{S}}}[h] > \varepsilon_{\bar{\mathbf{S}}} \right\}.
\end{aligned}
\end{equation*}
The first inequality arises from \cref{lm:sm}. The second inequality holds since the maximum value of \(\kappa_t (t \in [0, T])\) is \(2N\); specifically, it is impossible to shatter a greater number of points from \(\mathbf{S}_t \cup \overline{\mathbf{S}}_t\). The third inequality is derived from the union bound. Let \(\delta' = \delta / (2N)^{T + 1}\). Then, we have
\begin{equation*}
\mathcal{P}^{2M}\left( J\left(\kappa_{0:T} \right) \right) \leq \delta' / (2N)^{T + 1} = \delta.
\end{equation*}
Applying \cref{lm:fb}, we obtain
\begin{equation*}
\kappa_t < \frac{(8 + \omega) R^2}{(\zeta - r_t)^2} < \frac{66 R^2}{(\zeta - r_t)^2},
\end{equation*}
where \(\omega > 0\) is a small constant ensuring continuity from the right, a condition of this lemma. Without loss of generality, we set \(\omega = 0.1\). Combining the above equations and \cref{tm:serb}, with probability at least \(1 - \delta\), we have
\begin{equation*}
\mathcal{L}_{\mathcal{P}_{\mathcal{S}}}[h] \leq \frac{132 R^2 \log(8eM) \log(32M)}{M \sum_{t = 0}^T (\zeta - r_t)^2} + \frac{6}{N}\log \frac{2N}{\delta}.
\end{equation*}
We complete the proof by applying the Jensen inequality and the following fundamental logarithm inequality to simplify this bound:
\begin{equation*}
\log(x) \leq \frac{x - 1}{\sqrt{x}} \leq \frac{x}{\sqrt{x}}, \quad \forall \, x \geq 1.
\end{equation*}
\end{proof}

\subsection{Proof of \cref{co:a}}
\cref{tm:erb} indicates that the generalization error bound is correlated with the margins over data subsets at different times \( t \), with these margins being determined by the weight function. Consequently, we introduce the weight function \(\alpha(t) = \left(\frac{t}{T}\right)^a\) for \( t \in [0,T] \) to derive a more specific generalization error bound.

Let's apply \(\alpha_t = \left(\frac{t}{T} \right)^a\) as the weight function for \cref{tm:erb}. For a given hypothesis \(h \in \mathcal{H}\) and an input \(\mathbf{x}_t \in \mathbf{S}_t\), the target value is given by
\begin{equation*}
\begin{aligned}
r_t & = \max_{\mathcal{P}_{\theta}(y | \mathbf{x}_{t})} \mathbb{E}_{\mathbf{u} \sim \mathcal{U}} \left[ \max_{y \in [K]} \alpha_t \mathbf{u} + (1 - \alpha_t) \mathcal{P}_{\theta}(y | \mathbf{x}_{t}) \right] \\
& = 1 - \left(1 - \frac{1}{K}\right) \alpha_t.
\end{aligned}
\end{equation*}
Assuming \(\zeta = 1 + \left(1 - \frac{1}{K}\right)\), we can express the sum as
\begin{equation*}
\sum_{t = 0}^T \left(\zeta - r_t \right)^2 = \left(1 - \frac{1}{K}\right)^2  \sum_{t = 0}^T \underbrace{\left( 1 - \left(\frac{t}{T} \right)^a \right)^2}_{\upsilon(t)}.
\end{equation*}
Since \(\upsilon(t)\) is a monotonically decreasing and non-negative function, we can estimate the sum as follows:
\begin{equation*}
\begin{aligned}
\sum_{t = 0}^T \upsilon(t) \geq &  \int_0^{T + 1} \upsilon(t) \, dt \geq  \int_0^{T} \upsilon(t) \, dt = \frac{2Ta^2}{(a + 1)(2a + 1)}.
\end{aligned}
\end{equation*}
The proof is completed by combining the above equations with \cref{tm:erb}.

\begin{table*}\tiny
\centering
\caption{OOD detection results on CIFAR10 and ResNet18. Results are averaged over three random trials.}
\label{tb:C10RN18}
\begin{tabular}{cccccccccccc}
\toprule
\multirow{2}{*}{Training} & \multirow{2}{*}{Method} & \multicolumn{2}{c}{Near-OOD} & \multicolumn{6}{c}{Far-OOD} & \multirow{2}{*}{\begin{tabular}[c]{@{}c@{}}Ave. \\ AUROC\end{tabular}} & \multirow{2}{*}{\begin{tabular}[c]{@{}c@{}}Ave. \\ Rank\end{tabular}} \\
\cmidrule(lr){3-4} \cmidrule(lr){5-10}
 &  & CIFAR100 & TIN & MNIST & SVHN & Textures & Place365 & LSUN & iSUN &  &  \\
\midrule \midrule
\multirow{7}{*}{\begin{tabular}[c]{@{}c@{}}w/o \\ training \\ID data\end{tabular}}
 & MSP         & 87.19$_{\pm0.33}$ & 88.87$_{\pm0.19}$ & 92.63$_{\pm1.57}$ & 91.46$_{\pm0.40}$ & 89.89$_{\pm0.71}$ & 89.92$_{\pm0.47}$ & 93.07$_{\pm0.12}$ & 92.58$_{\pm0.23}$ & 90.70 & 4.56 \\
 & ODIN        & 82.18$_{\pm1.87}$ & 83.55$_{\pm1.84}$ & 95.24$_{\pm1.96}$ & 84.58$_{\pm0.77}$ & 86.94$_{\pm2.26}$ & 85.07$_{\pm1.24}$ & 94.56$_{\pm1.42}$ & 93.90$_{\pm1.39}$ & 88.25 & 4.78 \\
 & EBD         & 86.36$_{\pm0.58}$ & 88.80$_{\pm0.36}$ & 94.32$_{\pm2.53}$ & 91.79$_{\pm0.98}$ & 91.79$_{\pm0.70}$ & 89.47$_{\pm0.78}$ & 92.51$_{\pm0.36}$ & 93.50$_{\pm0.42}$ & 91.07 & 4.11 \\
 & GradNorm    & 54.43$_{\pm1.59}$ & 55.37$_{\pm0.41}$ & 63.72$_{\pm7.37}$ & 53.91$_{\pm6.36}$ & 52.07$_{\pm4.09}$ & 60.50$_{\pm5.33}$ & 59.64$_{\pm4.56}$ & 57.42$_{\pm3.99}$ & 57.13 & 7.00 \\
 & GEN         & 87.21$_{\pm0.36}$ & 89.20$_{\pm0.25}$ & 93.83$_{\pm2.14}$ & 91.97$_{\pm0.66}$ & 90.14$_{\pm0.76}$ & 89.46$_{\pm0.65}$ & 93.17$_{\pm0.62}$ & 93.44$_{\pm0.71}$ & 91.05 & 3.78 \\
 & FeatureNorm & 86.41$_{\pm2.19}$ & 89.30$_{\pm1.53}$ & 94.46$_{\pm1.46}$ & \textbf{98.65}$_{\pm0.69}$ & 92.31$_{\pm1.36}$ & 84.62$_{\pm1.46}$ & 99.96$_{\pm1.33}$ & 95.38$_{\pm1.24}$ & 92.64 & 2.56 \\
 & CA$^-$      & \textbf{88.16}$_{\pm0.26}$ & \textbf{90.12}$_{\pm0.27}$ & \textbf{95.62}$_{\pm0.74}$ & 95.00$_{\pm0.33}$ & \textbf{96.51}$_{\pm0.43}$ & \textbf{90.12}$_{\pm0.45}$ & 97.33$_{\pm0.36}$ & \textbf{96.14}$_{\pm0.33}$ & \textbf{93.63} & \textbf{1.22} \\
\midrule \midrule
\multirow{7}{*}{\begin{tabular}[c]{@{}c@{}}w/ \\ training \\ID data\end{tabular}}
 & G-ODIN      & 88.14$_{\pm0.60}$ & 90.09$_{\pm0.54}$ & 98.95$_{\pm0.53}$ & 97.76$_{\pm0.14}$ & 95.02$_{\pm1.10}$ & 90.31$_{\pm0.65}$ & 93.65$_{\pm0.36}$ & 93.12$_{\pm0.42}$ & 93.38 & 3.67 \\
 & ARPL        & 86.76$_{\pm0.16}$ & 88.12$_{\pm0.14}$ & 92.62$_{\pm0.88}$ & 87.69$_{\pm0.97}$ & 88.57$_{\pm0.43}$ & 88.39$_{\pm0.16}$ & 94.78$_{\pm0.32}$ & 95.64$_{\pm0.29}$ & 90.32 & 5.22 \\
 & MOS         & 70.57$_{\pm3.04}$ & 72.34$_{\pm3.16}$ & 74.81$_{\pm10.05}$ & 73.66$_{\pm9.14}$ & 70.35$_{\pm3.11}$ & 86.81$_{\pm1.85}$ & 74.02$_{\pm2.03}$ & 78.96$_{\pm1.97}$ & 75.19 & 7.00 \\
 & LogitNorm   & 90.95$_{\pm0.22}$ & 93.70$_{\pm0.06}$ & \textbf{99.14}$_{\pm0.45}$ & \textbf{98.25}$_{\pm0.41}$ & 94.77$_{\pm0.43}$ & 94.79$_{\pm0.16}$ & 94.64$_{\pm0.31}$ & 96.31$_{\pm0.27}$ & 95.32 & 2.22 \\
 & CIDER       & 84.47$_{\pm0.19}$ & 91.94$_{\pm0.19}$ & 93.30$_{\pm1.08}$ & 98.06$_{\pm0.07}$ & 93.71$_{\pm0.39}$ & 93.77$_{\pm0.68}$ & 93.03$_{\pm0.21}$ & 94.85$_{\pm0.41}$ & 92.89 & 4.22 \\
 & VIM         & 87.75$_{\pm0.28}$ & 89.62$_{\pm0.33}$ & 94.76$_{\pm0.38}$ & 94.50$_{\pm0.48}$ & 95.15$_{\pm0.36}$ & 89.49$_{\pm0.39}$ & 95.74$_{\pm0.24}$ & 94.50$_{\pm0.35}$ & 92.69 & 4.11 \\
 & CA$^+$      & \textbf{91.21}$_{\pm0.31}$ & \textbf{93.97}$_{\pm0.28}$ & 98.52$_{\pm0.19}$ & 97.41$_{\pm0.36}$ & \textbf{96.21}$_{\pm0.44}$ & \textbf{95.53}$_{\pm0.46}$ & \textbf{96.54}$_{\pm0.34}$ & \textbf{96.81}$_{\pm0.33}$ & \textbf{95.78} & \textbf{1.56} \\
\bottomrule
\end{tabular}
\vspace{0.2cm}
\end{table*}

\begin{table*}\tiny
\centering
\caption{OOD detection results on CIFAR100 and ResNet18. Results are averaged over three random trials.}
\label{tb:C100RN18}
\begin{tabular}{cccccccccccc}
\toprule
\multirow{2}{*}{Training} & \multirow{2}{*}{Method} & \multicolumn{2}{c}{Near-OOD} & \multicolumn{6}{c}{Far-OOD} & \multirow{2}{*}{\begin{tabular}[c]{@{}c@{}}Ave. \\ AUROC\end{tabular}} & \multirow{2}{*}{\begin{tabular}[c]{@{}c@{}}Ave. \\ Rank\end{tabular}} \\
\cmidrule(lr){3-4} \cmidrule(lr){5-10}
 &  & CIFAR10 & TIN & MNIST & SVHN & Textures & Place365 & LSUN & iSUN &  &  \\
\midrule \midrule
\multirow{7}{*}{\begin{tabular}[c]{@{}c@{}}w/o \\ training \\ID data\end{tabular}}
 & MSP         & 78.47$_{\pm0.07}$ & 82.07$_{\pm0.17}$ & 76.08$_{\pm1.86}$ & 78.42$_{\pm0.89}$ & 77.32$_{\pm0.71}$ & 79.22$_{\pm0.29}$ & 83.61$_{\pm0.16}$ & 82.16$_{\pm0.14}$ & 79.67 & 5.22 \\
 & ODIN        & 78.18$_{\pm0.14}$ & 81.63$_{\pm0.08}$ & \textbf{83.79}$_{\pm1.31}$ & 74.54$_{\pm0.76}$ & 79.33$_{\pm1.08}$ & 79.45$_{\pm0.26}$ & 84.10$_{\pm0.12}$ & 82.36$_{\pm0.09}$ & 80.42 & 4.00 \\
 & EBD         & 79.05$_{\pm0.11}$ & 82.76$_{\pm0.08}$ & 79.18$_{\pm1.37}$ & 82.03$_{\pm1.74}$ & 78.35$_{\pm0.83}$ & 79.52$_{\pm0.23}$ & 83.95$_{\pm0.14}$ & 83.66$_{\pm0.21}$ & 81.06 & 3.11 \\
 & GradNorm    & 70.32$_{\pm0.20}$ & 69.95$_{\pm0.79}$ & 65.35$_{\pm1.12}$ & 76.95$_{\pm4.73}$ & 64.58$_{\pm1.13}$ & 69.69$_{\pm0.17}$ & 78.01$_{\pm0.21}$ & 79.64$_{\pm0.71}$ & 71.81 & 6.89 \\
 & GEN         & 79.38$_{\pm0.04}$ & 83.25$_{\pm0.13}$ & 79.29$_{\pm2.05}$ & 81.41$_{\pm1.50}$ & 78.74$_{\pm0.81}$ & 80.28$_{\pm0.27}$ & 84.63$_{\pm0.34}$ & 82.26$_{\pm0.21}$ & 81.16 & 2.67 \\
 & FeatureNorm & 78.12$_{\pm0.23}$ & 79.51$_{\pm0.14}$ & 78.41$_{\pm1.32}$ & 80.69$_{\pm1.06}$ & 79.63$_{\pm0.82}$ & 76.33$_{\pm0.33}$ & 82.65$_{\pm0.42}$ & 83.06$_{\pm0.39}$ & 79.80 & 4.67 \\
 & CA$^-$      & \textbf{80.35}$_{\pm0.21}$ & \textbf{84.21}$_{\pm0.15}$ & 78.21$_{\pm1.44}$ & \textbf{85.99}$_{\pm0.87}$ & \textbf{81.01}$_{\pm0.79}$ & \textbf{81.21}$_{\pm0.29}$ & \textbf{87.56}$_{\pm0.19}$ & \textbf{86.21}$_{\pm0.23}$ & \textbf{83.09} & \textbf{1.44} \\
\midrule \midrule
\multirow{7}{*}{\begin{tabular}[c]{@{}c@{}}w/ \\ training \\ID data\end{tabular}}
 & G-ODIN      & 73.04$_{\pm0.39}$ & 81.26$_{\pm0.29}$ & \textbf{91.15}$_{\pm2.86}$ & 83.74$_{\pm3.10}$ & \textbf{89.62}$_{\pm0.36}$ & 78.17$_{\pm0.62}$ & 84.65$_{\pm0.41}$ & 86.96$_{\pm0.36}$ & 83.57 & 2.78 \\
 & ARPL        & 73.38$_{\pm0.78}$ & 76.50$_{\pm1.11}$ & 73.77$_{\pm5.89}$ & 76.45$_{\pm1.00}$ & 69.63$_{\pm1.33}$ & 74.62$_{\pm0.57}$ & 79.21$_{\pm0.81}$ & 77.74$_{\pm0.62}$ & 75.16 & 6.44 \\
 & MOS         & 78.54$_{\pm0.13}$ & 82.26$_{\pm0.25}$ & 80.68$_{\pm1.65}$ & 81.59$_{\pm3.81}$ & 79.92$_{\pm0.57}$ & 78.50$_{\pm0.34}$ & 83.96$_{\pm0.34}$ & 82.63$_{\pm0.24}$ & 81.01 & 4.22 \\
 & LogitNorm   & 74.57$_{\pm0.39}$ & 82.37$_{\pm0.24}$ & 90.69$_{\pm1.38}$ & 82.80$_{\pm4.57}$ & 72.37$_{\pm0.67}$ & 80.25$_{\pm0.61}$ & 84.51$_{\pm0.51}$ & 83.64$_{\pm0.49}$ & 81.40 & 3.44 \\
 & CIDER       & 67.55$_{\pm0.60}$ & 78.65$_{\pm0.35}$ & 68.14$_{\pm3.98}$ & \textbf{97.17}$_{\pm0.34}$ & 82.21$_{\pm1.93}$ & 74.43$_{\pm0.64}$ & 81.87$_{\pm0.42}$ & 81.14$_{\pm0.56}$ & 78.90 & 5.44 \\
 & VIM         & 72.21$_{\pm0.41}$ & 77.76$_{\pm0.16}$ & 81.89$_{\pm1.02}$ & 83.14$_{\pm3.71}$ & 85.91$_{\pm0.78}$ & 75.85$_{\pm0.37}$ & 85.41$_{\pm0.41}$ & 84.61$_{\pm0.33}$ & 80.85 & 4.11 \\
 & CA$^+$      & \textbf{79.21}$_{\pm0.24}$ & \textbf{82.96}$_{\pm0.15}$ & 88.33$_{\pm0.22}$ & 89.75$_{\pm0.56}$ & 84.51$_{\pm0.54}$ & \textbf{80.82}$_{\pm0.48}$ & \textbf{85.51}$_{\pm0.46}$ & \textbf{87.76}$_{\pm0.32}$ & \textbf{84.86} & \textbf{1.56} \\
\bottomrule
\end{tabular}
\vspace{0.2cm}
\end{table*}

\begin{table*}\scriptsize
\centering
\caption{OOD detection results on ImageNet-1K and ResNet50.}
\label{tb:INRN50}
\begin{tabular}{cccccccccccc}
\toprule
\multirow{2}{*}{Training} & \multirow{2}{*}{Method} & \multicolumn{2}{c}{Near-OOD} & \multicolumn{6}{c}{Far-OOD} & \multirow{2}{*}{\begin{tabular}[c]{@{}c@{}}Ave. \\ AUROC\end{tabular}} & \multirow{2}{*}{\begin{tabular}[c]{@{}c@{}}Ave. \\ Rank\end{tabular}} \\
\cmidrule(lr){3-4} \cmidrule(lr){5-10}
 &  & SSB-hard & NINCO & iNaturalist & Textures & OIO & Place365 & Caltech256 & COCO &  &  \\
\midrule \midrule
\multirow{7}{*}{\begin{tabular}[c]{@{}c@{}}w/o \\ training \\ID data\end{tabular}}
 & MSP         & 72.09 & 79.95 & 88.41 & 82.43 & 84.86 & 79.76 & 81.58 & 87.31 & 82.05 & 5.22 \\
 & ODIN        & 71.74 & 77.77 & 91.17 & 89.00 & 88.23 & 81.78 & 76.00 & 87.83 & 82.94 & 4.89 \\
 & EBD         & 72.08 & 79.70 & 90.63 & 88.70 & 89.06 & 82.86 & 77.10 & 87.65 & 83.47 & 4.11 \\
 & GradNorm    & 71.90 & 74.02 & 93.89 & 92.05 & 84.82 & 78.62 & 83.00 & 86.20 & 83.06 & 5.00 \\
 & GEN         & 72.01 & 81.70 & 92.44 & 87.59 & 89.26 & 80.66 & 82.30 & 85.29 & 83.91 & 4.00 \\
 & FeatureNorm & 71.86 & 77.54 & 95.76 & \textbf{95.39} & 87.44 & 84.99 & 81.96 & 86.47 & 85.18 & 3.67 \\
 & CA$^-$      & \textbf{73.56} & \textbf{82.55} & \textbf{95.88} & 94.56 & \textbf{89.78} & \textbf{85.33} & \textbf{84.77} & \textbf{88.54} & \textbf{86.87} & \textbf{1.11} \\
\midrule \midrule
\multirow{7}{*}{\begin{tabular}[c]{@{}c@{}}w/ \\ training \\ID data\end{tabular}}
& G-ODIN    & 66.46  & 75.08 & 87.92  & 82.88 & 85.72 & 80.30 & 83.51 & 85.08 & 80.87 & 6.25\\
& ARPL      & \textbf{72.67} & 79.92  & 88.48 & 82.70 & 85.32 & 82.12 & 85.20 & 87.74 & 83.02 & 4.25 \\
& MOS       & 69.73  & 75.97 & 94.48 & 71.90 & 81.86  & 80.51 & 86.45 & 85.96 & 80.86 & 5.13 \\
& LogitNorm & 67.50  & 81.73 & 94.57 & 89.30 & 90.75  & 83.11 & 89.39 & 88.79 & 85.64 & 2.63 \\
& CIDER     & 59.34  & 78.60 & 90.76 & 96.38 & 89.39  & 84.51 & 87.63 & 87.61 & 84.28 & 3.88 \\
& VIM       & 65.54  & 78.63 & 89.56 & \textbf{97.97} & 90.50 & 80.87 & 84.30 & 87.33 & 84.34 & 4.38 \\
& CA$^+$    & 68.84  & \textbf{81.78}  & \textbf{95.93} & 95.63  & \textbf{91.43} & \textbf{85.78}  & \textbf{89.96}  & \textbf{89.18} & \textbf{87.32} & \textbf{1.50} \\
\bottomrule
\end{tabular}
\end{table*}

\section{Experiments}\label{sec:experiment}
In this section, we evaluate the effectiveness of our proposed CA approach\footnote{Source code is available at \url{https://github.com/Lawliet-zzl/CA}.} by comparing its performance with state-of-the-art OOD detection methods, both with and without access to training ID data. To address these scenarios, we introduce two variants of the CA algorithm: CA$^-$, which operates independently of training ID data, and CA$^+$, which leverages training ID data to enhance alignment with the ID distribution. Specifically, CA$^-$ applies the regularizer $\mathcal{R}^-(\widehat{\mathbf{x}})$ from \cref{eq:RU} during sample synthesis to incorporate prior knowledge of the synthesized samples, while CA$^+$ uses the regularizer $\mathcal{R}^+(\widehat{\mathbf{x}})$ from \cref{eq:RS} to align synthesized samples more closely with the training ID data distribution. When a distinction between the two versions is unnecessary, we refer to both as CA.

\subsection{Setup}
In this section, we provide a detailed overview of the ID and OOD datasets and network architectures used to evaluate OOD detection performance. We also outline the metrics selected for assessing both ID classification and OOD detection. Finally, we present the implementation details of the proposed CA method.

\subsubsection{Datasets and Network Architectures}
We utilize three ID datasets to train our networks: CIFAR10~\cite{CIFAR10:09}, CIFAR100~\cite{CIFAR10:09}, and ImageNet-1K~\cite{DBLP:conf/cvpr/DengDSLL009}. For evaluating OOD detection capabilities during testing, following well-established practices in OOD detection research~\cite{DBLP:conf/nips/YangWZZDPWCLSDZ22, DBLP:journals/corr/abs-2306-09301, DBLP:conf/nips/TackMJS20, DBLP:conf/cvpr/YuSLJL23}, we consider two types of OOD samples: near-OOD and far-OOD. Near-OOD samples have similar content and semantics to the ID dataset, often containing objects similar to those in the ID set. Far-OOD samples, by contrast, have distinct content, such as digits, textures, or scenes, and differ significantly in semantics. For CIFAR10, we use CIFAR100 and Tiny ImageNet (TIN)~\cite{le2015tiny} as near-OOD datasets and MNIST~\cite{DBLP:journals/spm/Deng12}, SVHN~\cite{netzer2011reading}, Textures~\cite{DBLP:conf/cvpr/CimpoiMKMV14}, Places365~\cite{DBLP:journals/pami/ZhouLKO018}, LSUN~\cite{DBLP:journals/corr/YuZSSX15} and iSUN~\cite{DBLP:journals/corr/XuEZFKX15} as far-OOD datasets. For CIFAR100, the near-OOD datasets are CIFAR10 and TIN, with far-OOD datasets the same as those used for CIFAR10. For ImageNet-1K, the near-OOD datasets are SSB-hard~\cite{DBLP:conf/iclr/Vaze0VZ22} and NINCO~\cite{DBLP:conf/icml/BitterwolfM023}, while the far-OOD datasets are iNaturalist~\cite{DBLP:conf/cvpr/HornASCSSAPB18}, Textures~\cite{DBLP:conf/cvpr/CimpoiMKMV14}, OpenImage-O (OIO)~\cite{wang2022vim}, Place365~\cite{DBLP:journals/pami/ZhouLKO018}, Caltech256~\cite{CAL:06}, and COCO~\cite{DBLP:conf/eccv/LinMBHPRDZ14}. Unless otherwise specified, we use ResNet18~\cite{DBLP:conf/cvpr/HeZRS16} as the network architecture for CIFAR10 and CIFAR100, and ResNet50~\cite{DBLP:conf/cvpr/HeZRS16} for ImageNet-1K.

\subsubsection{Evaluation Metrics}
To assess the OOD detection performance, each method assigns an OOD score to every test sample. We utilize the area under the receiver operating characteristic curve (AUROC)~\cite{AUROC:06} and Detection error~\cite{DBLP:conf/iclr/LiangLS18} as metrics to gauge the ranking efficacy of these scores. Superior OOD detection is reflected by a higher AUROC and a lower Detection error. Specifically, AUROC evaluates the likelihood that an ID sample receives a score higher than an OOD sample. In contrast, Detection pinpoints the proficiency of a model in recognizing OOD samples, with emphasis on minimizing the misclassification of ID samples as OOD. For assessing ID classification prowess, we employ Accuracy, which denotes the fraction of ID samples the model correctly classifies.

\subsubsection{Implementation Details}
To address scenarios with and without training ID data, we introduce two versions of the CA algorithm: CA$^-$ and CA$^+$. Unless otherwise specified, both CA$^-$ and CA$^+$ use $a = 10$ in the weight function and $T = 1000$ for sample synthesis. If not noted, both the standard network and its binary classifier adopt the same architecture. For CA$^-$ with $\mathcal{R}^-(\widehat{\mathbf{x}})$, parameters are $\beta_{\text{TV}} = 10^{-3}$, $\beta_{\text{f}} = 100$, and $\beta_{l_2} = 3\cdot10^{-8}$. For CA$^+$ with $\mathcal{R}^+(\widehat{\mathbf{x}})$, we set $\beta_{\text{MSE}} = 1$. While these parameters perform adequately, the focus of this paper is on utilizing synthesized samples for OOD-sensitive classification, so detailed parameter tuning is beyond the scope of this work.

\begin{table*}\scriptsize
\centering
\caption{OOD detection results on ImageNet-1K and ResNet101.}
\label{tb:INRN101}
\begin{tabular}{cccccccccccc}
\toprule
\multirow{2}{*}{Training} & \multirow{2}{*}{Method} & \multicolumn{2}{c}{Near-OOD} & \multicolumn{6}{c}{Far-OOD} & \multirow{2}{*}{\begin{tabular}[c]{@{}c@{}}Ave. \\ AUROC\end{tabular}} & \multirow{2}{*}{\begin{tabular}[c]{@{}c@{}}Ave. \\ Rank\end{tabular}} \\
\cmidrule(lr){3-4} \cmidrule(lr){5-10}
 &  & SSB-hard & NINCO & iNaturalist & Textures & OIO & Place365 & Caltech256 & COCO &  &  \\
\midrule \midrule
\multirow{7}{*}{\begin{tabular}[c]{@{}c@{}}w/o \\ training \\ID data\end{tabular}}
& MSP & 72.38 & 80.06 & 88.93 & 83.08 & 85.71 & 79.84 & 82.17 & 88.51 & 82.59 & 5.25 \\
& ODIN & 72.64 & 77.70 & 91.75 & 89.32 & 88.70 & 82.23 & 76.37 & 88.06 & 83.35 & 4.75 \\
& EBD & 72.51 & 80.48 & 90.66 & 89.45 & 89.57 & 82.84 & 78.03 & 88.52 & 84.01 & 3.88 \\
& GradNorm & 72.66 & 74.90 & 94.52 & 92.91 & 85.44 & 78.47 & 83.48 & 86.32 & 83.59 & 4.63 \\
& GEN & 72.15 & 82.01 & 93.20 & 88.71 & \textbf{90.31} & 81.15 & 82.47 & 85.42 & 84.43 & 4.25 \\
& FeatureNorm & 71.94 & 78.18 & 96.37 & \textbf{95.57} & 88.44 & 85.04 & 82.17 & 86.89 & 85.57 & 4.00 \\
& CA$^-$ & \textbf{74.15} & \textbf{82.80} & \textbf{97.19} & 94.93 & 90.10 & \textbf{85.99} & \textbf{85.35} & \textbf{89.31} & \textbf{87.48} & \textbf{1.25} \\
\midrule \midrule
\multirow{7}{*}{\begin{tabular}[c]{@{}c@{}}w/ \\ training \\ID data\end{tabular}}
& G-ODIN & 66.48 & 75.77 & 88.43 & 83.71 & 86.52 & 80.79 & 83.69 & 85.18 & 81.32 & 6.25 \\
& ARPL & \textbf{73.51} & 80.28 & 88.87 & 83.67 & 85.37 & 83.07 & 86.16 & 88.53 & 83.68 & 4.25 \\
& MOS & 69.84 & 76.51 & 95.46 & 72.32 & 82.26 & 81.31 & 87.21 & 86.43 & 81.42 & 4.88 \\
& LogitNorm & 68.34 & 81.86 & 94.86 & 89.35 & 91.72 & 83.24 & \textbf{90.34} & 89.48 & 86.15 & 2.50 \\
& CIDER & 59.68 & 79.51 & 91.35 & 97.17 & 89.80 & 84.75 & 88.30 & 88.29 & 84.86 & 3.75 \\
& VIM & 65.97 & 79.00 & 90.13 & \textbf{98.60} & 90.83 & 81.00 & 84.49 & 87.75 & 84.72 & 4.63 \\
& CA$^+$ & 69.63 & \textbf{81.96} & \textbf{96.55} & 96.14 & \textbf{92.07} & \textbf{86.31} & 90.25 & \textbf{89.84} & \textbf{87.84} & \textbf{1.75} \\
\bottomrule
\end{tabular}
\end{table*}

\begin{table*}\scriptsize
\centering
\caption{OOD detection results on ImageNet-1K and ViT-L.}
\label{tb:INVIT}
\begin{tabular}{cccccccccccc}
\toprule
\multirow{2}{*}{Training} & \multirow{2}{*}{Method} & \multicolumn{2}{c}{Near-OOD} & \multicolumn{6}{c}{Far-OOD} & \multirow{2}{*}{\begin{tabular}[c]{@{}c@{}}Ave. \\ AUROC\end{tabular}} & \multirow{2}{*}{\begin{tabular}[c]{@{}c@{}}Ave. \\ Rank\end{tabular}} \\
\cmidrule(lr){3-4} \cmidrule(lr){5-10}
 &  & SSB-hard & NINCO & iNaturalist & Textures & OIO & Place365 & Caltech256 & COCO &  &  \\
\midrule \midrule
\multirow{7}{*}{\begin{tabular}[c]{@{}c@{}}w/o \\ training \\ID data\end{tabular}}
& MSP & 69.32 & 78.55 & 88.68 & 85.10 & 85.28 & 80.74 & 81.76 & 88.13 & 82.19 & 4.63 \\
& ODIN & 69.48 & 78.62 & 88.69 & 86.26 & 85.17 & 82.51 & 76.42 & 88.55 & 81.96 & 4.13 \\
& EBD & 59.75 & 66.54 & 79.50 & 81.41 & 76.99 & 83.20 & 77.19 & 88.62 & 76.65 & 5.00 \\
& GradNorm & 42.88 & 35.62 & 42.74 & 45.11 & 38.35 & 59.20 & 53.60 & 56.73 & 46.78 & 7.00 \\
& GEN & 70.22 & \textbf{83.48} & 94.49 & 90.72 & \textbf{90.48} & 80.77 & 82.77 & 85.62 & 84.82 & 3.00 \\
& FeatureNorm & 72.03 & 77.71 & 96.18 & \textbf{96.27} & 88.03 & 85.90 & 82.66 & 86.58 & 85.67 & 2.88 \\
& CA$^-$ & \textbf{73.95} & 82.66 & \textbf{96.86} & 95.23 & 90.01 & \textbf{86.21} & \textbf{85.47} & \textbf{89.15} & \textbf{87.44} & \textbf{1.38} \\
\midrule \midrule
\multirow{7}{*}{\begin{tabular}[c]{@{}c@{}}w/ \\ training \\ID data\end{tabular}}
& G-ODIN & 67.29 & 75.45 & 88.22 & 83.07 & 86.10 & 81.12 & 84.15 & 85.86 & 81.41 & 6.17 \\
& ARPL & \textbf{73.47} & 80.12 & 89.18 & 83.07 & 85.90 & 82.38 & 85.23 & 88.16 & 83.44 & 4.28 \\
& MOS & 69.79 & 76.46 & 95.15 & 72.36 & 82.11 & 81.10 & 86.52 & 86.05 & 81.19 & 5.33 \\
& LogitNorm & 67.90 & \textbf{82.07} & 95.11 & 90.28 & 91.04 & 83.13 & 89.71 & 89.06 & 86.04 & 2.56 \\
& CIDER & 59.87 & 79.55 & 91.46 & 96.54 & 90.01 & 84.94 & 88.16 & 87.76 & 84.78 & 3.83 \\
& VIM & 65.96 & 79.55 & 90.23 & \textbf{98.83} & 90.77 & 81.18 & 84.95 & 87.61 & 84.88 & 4.28 \\
& CA$^+$ & 69.50 & 81.83 & \textbf{96.11} & 96.27 & \textbf{92.25} & \textbf{85.94} & \textbf{90.37} & \textbf{89.62} & \textbf{87.74} & \textbf{1.56} \\
\bottomrule
\end{tabular}
\end{table*}

\subsection{Comparative Experiments}
To validate the efficacy of our proposed CA method, we benchmark it against leading OOD detection techniques in scenarios with and without access to training ID data. For a standard network trained using an ID dataset, CA learns its binary classifier to discern between ID and OOD samples under both settings. To ensure fairness, when training ID data is absent, we compare CA$^-$ against renowned OOD detection techniques that do not require retaining or fine-tuning the standard network. These include Maximum over Softmax Probability (MSP)~\cite{DBLP:conf/iclr/HendrycksG17}, ODIN~\cite{DBLP:conf/iclr/LiangLS18}, Energy-based Detector (EBD)~\cite{DBLP:conf/nips/LiuWOL20}, GradNorm~\cite{DBLP:conf/nips/HuangGL21}, GEN~\cite{DBLP:conf/cvpr/LiuLZ23}, and FeatureNorm~\cite{DBLP:conf/cvpr/YuSLJL23}. Conversely, when training ID data is available, CA$^+$ is compared to state-of-the-art methods that necessitate retraining the standard network on the ID training data, such as G-ODIN~\cite{DBLP:conf/cvpr/HsuSJK20}, Adversarial Reciprocal Points Learning (ARPL)~\cite{DBLP:journals/pami/ChenPWT22}, Minimum Others Score (MOS)~\cite{huang2021mos}, LogitNorm~\cite{DBLP:journals/corr/abs-2106-09022}, CIDER~\cite{DBLP:conf/iclr/MingSD023}, and ViM~\cite{wang2022vim}. Following standard practices in OOD detection research~\cite{DBLP:conf/nips/TackMJS20}, we train a ResNet18 on CIFAR10/100 for 100 epochs with cross-entropy loss. We use SGD with 0.9 momentum, a learning rate of 0.1 with cosine annealing, and a weight decay of 0.0005, with a batch size of 128. For methods with specific configurations, we apply their official implementations and settings whenever possible. For ImageNet-1K, we use the pre-trained standard network from torchvision, focusing on ResNet50. If official checkpoints are unavailable, we fine-tune the standard network for 30 epochs with a learning rate of 0.001.

The OOD detection performance on CIFAR10, CIFAR100, and ImageNet-1K are presented in \cref{tb:C10RN18}, \cref{tb:C100RN18}, and \cref{tb:INRN50}, respectively. These results highlight the robustness of CA$^-$ in scenarios without access to ID data and the optimal performance of CA$^+$ when ID data is available, both achieving top ranks across different benchmarks, validating the effectiveness of our proposed method. The performance gains can be attributed to the ability of CA to fine-tune the confidence distributions of ID and OOD samples effectively. This approach enables CA to better separate ID and OOD distributions, improving detection accuracy across diverse datasets. While the performance gains on ImageNet-1K are slightly less pronounced compared to CIFAR100, this can be attributed to the increased diversity and complexity inherent in the ImageNet-1K dataset, which contains a broader array of semantic classes and a higher degree of inter-class variability. Such characteristics can challenge most OOD detection methods due to the finer granularity of distinctions required among classes.

Specifically, for the setting without training ID data, CA$^-$ outperforms the closest competitor, FeatureNorm on CIFAR10, by $1.5\%$. CA$^-$ also ranks first with an average rank of $1.22$ across the tested datasets, significantly surpassing the baseline methods in overall ranking. On CIFAR100, CA$^-$ leads the second-best method, GEN, by $1.9\%$, and ranks first with an average rank of $1.44$. On ImageNet-1K, CA$^-$ improves over FeatureNorm by approximately $1.7\%$ and again ranks first with an average rank of $1.11$. In the setting with training ID data, CA$^+$ further demonstrates strong performance. On CIFAR10, CA$^+$ outperforms the next-best method, LogitNorm, by $2.4\%$ and ranking first with an average rank of $1.56$. On CIFAR100, CA$^+$ surpasses G-ODIN by $1.8\%$ and secures a top average rank of $1.56$. On ImageNet-1K, CA$^+$ exceeds LogitNorm by $1.7\%$ and holding the best average rank of $1.50$. These results highlight the robustness of CA$^-$ in scenarios without access to ID data and the optimal performance of CA$^+$ when ID data is available, both achieving top ranks across different benchmarks, which validates the effectiveness of our proposed method.

\begin{table}[]
\centering
\caption{Comparison with methods utilizing auxiliary OOD data. The results represent the average AUROC across eight OOD datasets.}
\label{tb:aood}
\begin{tabular}{cccc}
\toprule
Dataset & CIFAR10  & CIFAR100 & ImageNet-1K \\
Network & ResNet18 & ResNet18 & ResNet50   \\
\midrule
DOE  & 91.11    & 77.89    & 78.10   \\
DOS  & 92.24    & 81.45    & 82.77  \\
GReg & 92.42    & 82.07    & 82.85  \\
\midrule
CA$^-$  & 93.63    & 83.47    & 83.46   \\
CA$^+$  & \textbf{96.02}    & \textbf{85.23}    & \textbf{87.84}   \\                    
\bottomrule
\end{tabular}
\end{table}

\subsection{Auxiliary OOD Supervision}
To further validate the effectiveness of our method, we compare it against recent OOD detection approaches that leverage auxiliary OOD data, including DOE~\cite{DOE:23}, DOS~\cite{DOS:24}, and GReg~\cite{GR:24}. These methods require access to explicitly provided outlier samples drawn from distributions disjoint with the in-distribution training data. In contrast, our method does not require such auxiliary OOD datasets. For fair comparison, we assume that the same standard OOD distribution used in our synthesis pipeline serves as the auxiliary OOD source for the methods. As shown in~\cref{tb:aood}, our method (both CA$^-$ and CA$^+$) consistently outperforms these baselines across all datasets. Notably, CA$^+$ achieves an average improvement of $3.25\%$ AUROC over the strongest baseline GReg. This indicates that while prior methods passively consume auxiliary OOD samples, our approach more effectively leverages the same distribution by synthesizing informative trajectories and calibrating model confidence, i.e., unlocking richer OOD sensitivity from the same supervision source.

% demonstrating the effectiveness of our confidence-guided synthesis and correction strategy even without relying on external OOD annotations.

\begin{table}\scriptsize
\centering
\caption{OOD detection results across different network architectures trained on ImageNet-1K. Standard network and OOD-sensitive binary classifier both use the same network architecture. The results represent the average AUROC across eight OOD datasets.}
\label{tb:INALL}
\begin{tabular}{cc|ccc|c}
\toprule
Training                                                                     & Method      & \begin{tabular}[c]{@{}c@{}}ResNet50\\ 25.6M\end{tabular} & \begin{tabular}[c]{@{}c@{}}ResNet101\\ 44.5M\end{tabular} & \begin{tabular}[c]{@{}c@{}}ViT-L\\ 307M\end{tabular} & \begin{tabular}[c]{@{}c@{}}Ave.\\ AUROC\end{tabular} \\
\midrule \midrule
\multirow{7}{*}{\begin{tabular}[c]{@{}c@{}}w/o\\ training \\ID data\end{tabular}}
& MSP         & 82.05 & \textbf{82.59} & 82.20 & 82.28          \\
& ODIN        & 82.94 & \textbf{83.35} & 81.96 & 82.75          \\
& EBD         & 83.47 & \textbf{84.01} & 76.65 & 81.38          \\
& GradNorm    & 83.06 & \textbf{83.59} & 46.78 & 71.14          \\
& GEN         & 83.91 & 84.43 & \textbf{84.82} & 84.39          \\
& FeatureNorm & 85.18 & 85.58 & \textbf{85.67} & 85.48          \\
& CA$^-$      & 86.87 & 87.48 & \textbf{87.44} & \textbf{87.26} \\
\midrule \midrule
\multirow{7}{*}{\begin{tabular}[c]{@{}c@{}}w/\\ training \\ID data\end{tabular}}
& G-ODIN      & 80.87 & 81.32 & \textbf{81.41} & 81.20          \\
& ARPL        & 83.02 & \textbf{83.68} & 83.44 & 83.38          \\
& MOS         & 80.86 & \textbf{81.42} & 81.19 & 81.16          \\
& LogitNorm   & 85.64 & \textbf{86.15} & 86.04 & 85.94          \\
& CIDER       & 84.28 & \textbf{84.86} & 84.79 & 84.64          \\
& VIM         & 84.34 & 84.72 & \textbf{84.89} & 84.65          \\
& CA$^+$      & 87.32 & \textbf{87.84} & 87.74 & \textbf{87.63} \\
\bottomrule
\end{tabular}
\end{table}

\subsection{Diverse Network Architectures}
We further evaluated the efficacy of the CA method across various network architectures, specifically ResNet50~\cite{DBLP:conf/cvpr/HeZRS16}, ResNet101~\cite{DBLP:conf/cvpr/HeZRS16}, and ViT-L~\cite{DBLP:conf/iclr/DosovitskiyB0WZ21}, trained on ImageNet-1K. The results are reported in \cref{tb:INRN50}, \cref{tb:INRN101}, and \cref{tb:INVIT} for each individual architecture, and in \cref{tb:INALL}, which provides a summary of these results across all architectures. This assessment provides insights into the robustness and adaptability of CA across different model sizes and architectures, including both convolutional neural and transformer-based networks. In the setting without training ID data, CA$^-$ consistently outperforms other methods across all architectures, achieving top ranks and significantly higher AUROC scores, particularly when compared to FeatureNorm and GEN. In the setting with training ID data, CA$^+$ shows similar advantages, consistently surpassing other methods, including LogitNorm, across all architectures. These results confirm the robustness of CA across various network types and sizes, effectively handling both convolutional neural and transformer-based networks. Notably, larger models with more parameters, such as ResNet101 and ViT-L, tend to achieve higher OOD detection performance. This improvement can be attributed to their greater capacity for learning complex feature representations, which enhances the separation between ID and OOD samples, leading to better detection accuracy.

\begin{figure*}
    \centering
    \begin{subfigure}[b]{0.3\linewidth}
        \centering
        \includegraphics[width=1\linewidth]{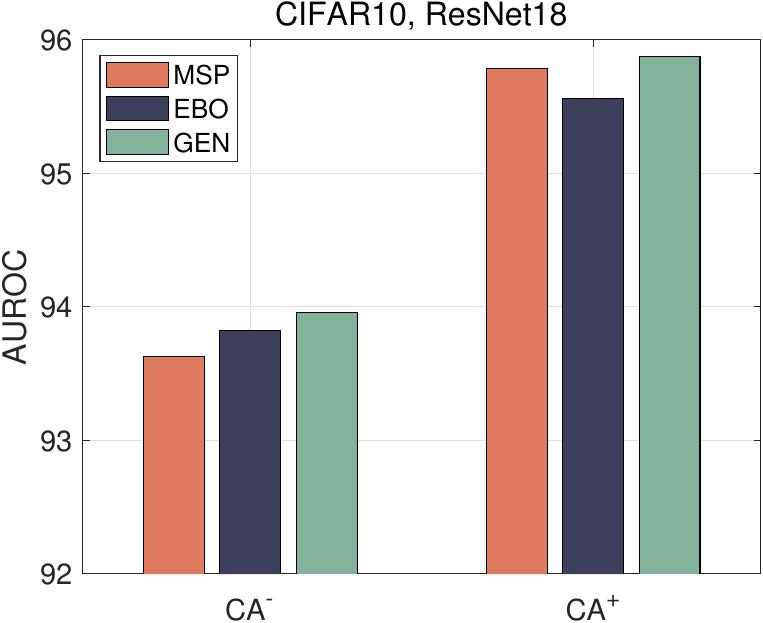}
    \end{subfigure}
    \hspace{0.01\textwidth}
    \begin{subfigure}[b]{0.3\linewidth}
        \centering
        \includegraphics[width=1\linewidth]{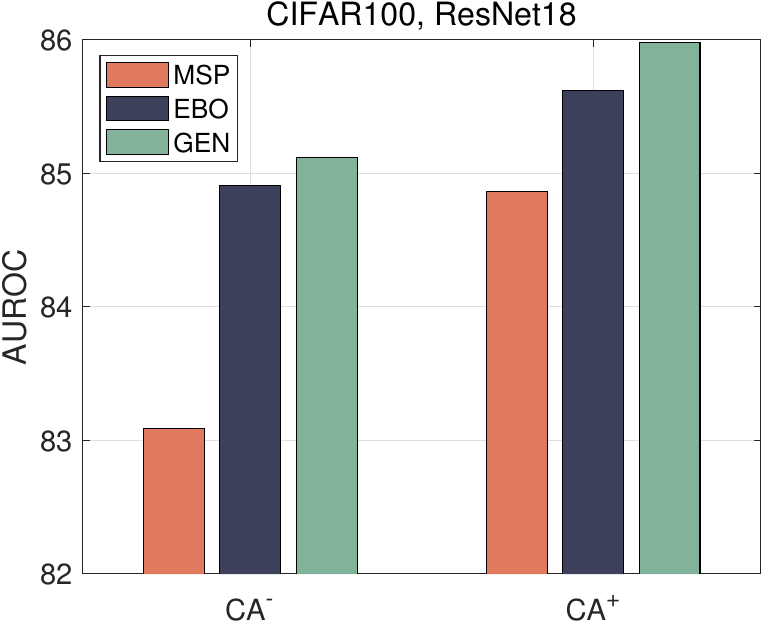}
    \end{subfigure}
    \hspace{0.01\textwidth}
    \begin{subfigure}[b]{0.3\linewidth}
        \centering
        \includegraphics[width=1\linewidth]{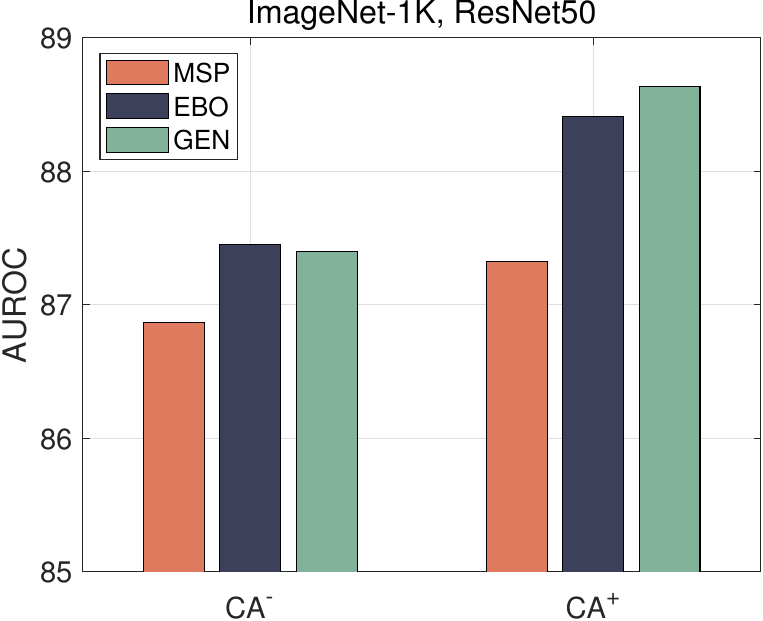}
    \end{subfigure}
    \caption{OOD detection performance using different OOD detectors on CA$^-$ and CA$^+$ with ResNet50 trained on ImageNet-1K. The results represent the average AUROC across eight OOD datasets.}
    \label{fig:detector}
\end{figure*}

\subsection{Diverse OOD Detectors}
To evaluate the robustness of our CA method under different OOD detection techniques, we conducted experiments using three widely recognized OOD detectors: MSP~\cite{DBLP:conf/iclr/HendrycksG17}, EBD~\cite{DBLP:conf/nips/LiuWOL20}, and GEN~\cite{DBLP:conf/cvpr/LiuLZ23}. \cref{fig:detector} presents the OOD detection performance in terms of AUROC for CA$^-$ and CA$^+$. The results demonstrate that CA$^+$ consistently outperforms CA$^-$ across all detectors and datasets. Notably, among the detectors, GEN achieves the highest AUROC scores for both CA$^+$ and CA$^-$, indicating its superior effectiveness in OOD detection within the CA framework. Compared to the baseline MSP, the use of more advanced detectors, such as EBD and GEN, leads to further performance gains, especially with CA$^+$ where the refined training process enhances detector capabilities. These findings highlight the flexibility of CA in integrating various OOD detectors. This adaptability across detectors reinforces the versatility of CA in diverse OOD detection scenarios. The observed improvements are likely due to the capability of CA to calibrate confidence distributions, which complements the strengths of advanced detectors, allowing them to better distinguish between ID and OOD samples, further boosting OOD detection performance.

\begin{table}\scriptsize
  \caption{OOD detection and ID classification using various binary classifier architectures for a standard network pre-trained on ImageNet-1K with ResNet50. The results represent the average AUROC across eight OOD datasets.}
  \centering
  \label{tb:trans}
\begin{tabular}{ccccccccc}
\toprule
\multirow{2}{*}{Type} & \multirow{2}{*}{Architecture} & \multicolumn{2}{c}{CA$^-$} & \multicolumn{2}{c}{CA$^+$} \\ \cmidrule(lr){3-4} \cmidrule(lr){5-6}
&   & AUROC & Accuracy & AUROC & Accuracy \\ \midrule \midrule
\multirow{4}{*}{Deep} &ResNet50       & \textbf{86.9} & \textbf{72.3} & \textbf{87.3} & 77.2 \\
&VGG16          & 68.4 & 56.3 & 84.6 & 69.5 \\
&SENet          & 85.6 & 64.5 & 86.4 & 79.3 \\
&ViT-L          & 82.4 & 65.7 & 86.0 & \textbf{80.5} \\ \midrule
\multirow{1}{*}{Shallow} &MLP            & 58.7 & 5.6 & 71.5 & 13.1 \\
&LeNet          & 63.5 & 12.7 & 79.9 & 15.9 \\ \midrule
Adapter &AIM            & 81.5 & 71.6 & 85.1 & 76.9 \\ \bottomrule
\end{tabular}
\end{table}

\subsection{Transferability Analysis}
We assess the adaptability of synthesized samples generated by a standard network, specifically a ResNet50 trained on ImageNet-1K, by testing them with various binary classifiers of different architectures. These classifiers include deep neural networks (e.g., ResNet50~\cite{DBLP:conf/cvpr/HeZRS16}, VGG16~\cite{DBLP:journals/corr/SimonyanZ14a}, SENet~\cite{DBLP:conf/cvpr/HuSS18}, and ViT-L~\cite{DBLP:conf/iclr/DosovitskiyB0WZ21}), shallow networks (such as MLP~\cite{rumelhart1986learning} and LeNet~\cite{DBLP:journals/pieee/LeCunBBH98}), and the adapter model AIM~\cite{DBLP:conf/iclr/YangZXZC023}, built on the base of the standard network. Detailed performance results are provided in \cref{tb:trans}. Among the deep networks, ResNet50 achieves the best performance, particularly with CA$^+$, underscoring the advantage of deeper architectures. Shallow networks, especially MLP, exhibit notably lower performance, highlighting their limitations in OOD sensitivity. The adapter model, AIM, shows competitive results, effectively bridging the performance gap between shallow and deep models. Overall, these results suggest that synthesized samples from a standard network can effectively train a binary classifier for OOD detection across a variety of architectures. Performance is further enhanced when the standard network and binary classifier share the same architecture, likely due to aligned feature extraction that preserves consistency in the feature space, thereby improving the ability of the classifier to distinguish ID from OOD samples.

\begin{figure}
    \centering
    \begin{subfigure}[b]{0.48\linewidth}
        \centering
        \includegraphics[width=1\linewidth]{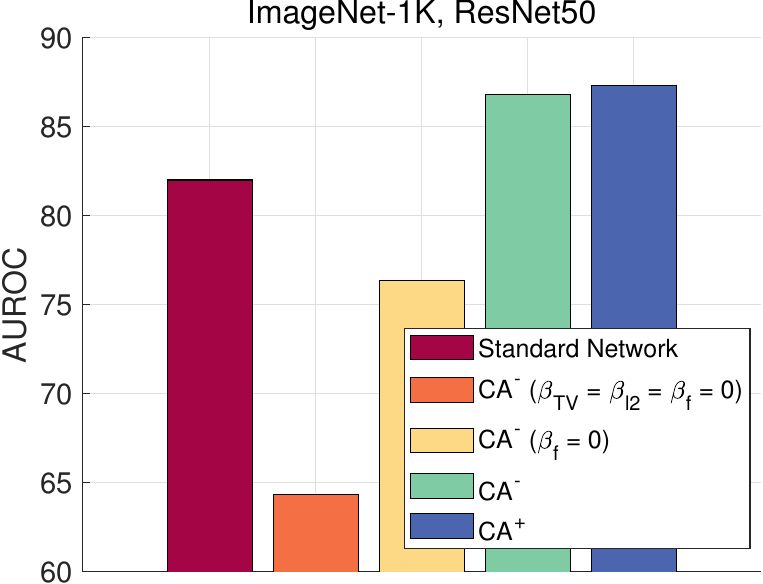}
    \end{subfigure}
    \hspace{0.01\textwidth}
    \begin{subfigure}[b]{0.48\linewidth}
        \centering
        \includegraphics[width=1\linewidth]{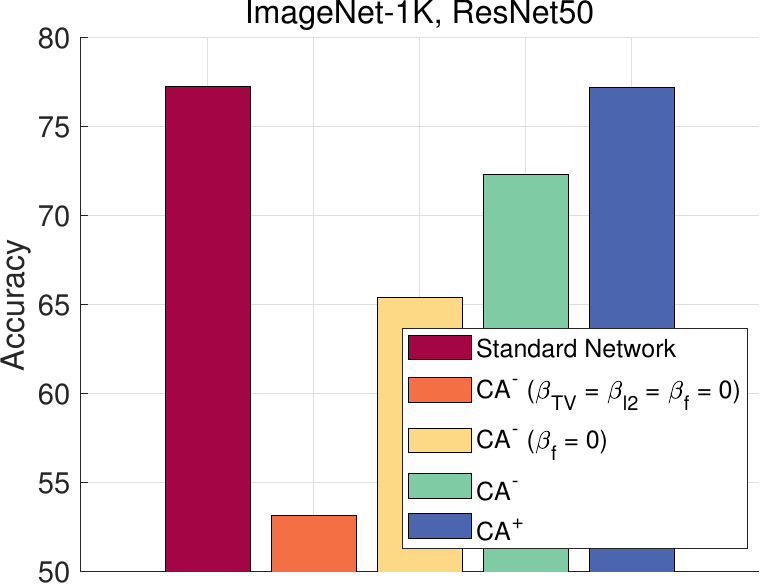}
    \end{subfigure}
    \caption{Ablation study on the effectiveness of synthesizing training data in terms of AUROC and Accuracy. The results represent the average AUROC across eight OOD datasets. Settings where regularization coefficients are zero indicate no use of the corresponding regularizers, while non-zero values indicate its inclusion. CrossEntropy represents the network trained with cross-entropy loss, and all networks apply the MSP detector.}
    \label{fig:aba}
\end{figure}

\subsection{Ablation Studies}
The ablation studies demonstrate that both synthesizing training data with regularization and adjusting predicted label distributions are crucial to enhancing the robustness of the CA method in OOD detection. These techniques enable the model to better differentiate between ID and OOD samples. All networks in these experiments utilize the MSP detector for consistency across evaluations.

\subsubsection{Effect of Synthesizing Training Data}
To evaluate the effectiveness of synthesizing training data, we conducted an ablation study using different configurations of our CA method, as shown in \cref{fig:aba}. The results highlight that CA models incorporating regularization terms achieve higher AUROC and accuracy scores across eight OOD datasets, compared to those trained without these terms. This improvement indicates that synthesizing training data with regularizers enhances the ability of the network to distinguish between ID and OOD samples. The standard network setting serves as a baseline, demonstrating the advantages of incorporating regularization for improved OOD detection.

\subsubsection{Effect of Adjusting Predicted Label Distributions}
To understand the impact of the weight function coefficient \( a \in [0, +\infty) \) in adjusting predicted label distributions, we examine its values across a range: \{0, 0.1, 1, 10, 100\}, as shown in \cref{fig:a}. \(a = 0\) indicates no adjustment, while \(a > 0\) introduces adjustments that gradually increase reliance on the predictions of the network as samples transition from OOD to ID. The results reveal that as \(a\) increases, both CA$^+$ and CA$^-$ models achieve higher AUROC scores. This indicates that adjusting label distributions to reflect confidence levels of ID samples further enhances the robustness of networks in OOD detection, as it aligns OOD samples with low-confidence predictions and ID samples with high-confidence ones.

\begin{figure}
  \centering
  \includegraphics[width=0.65\linewidth]{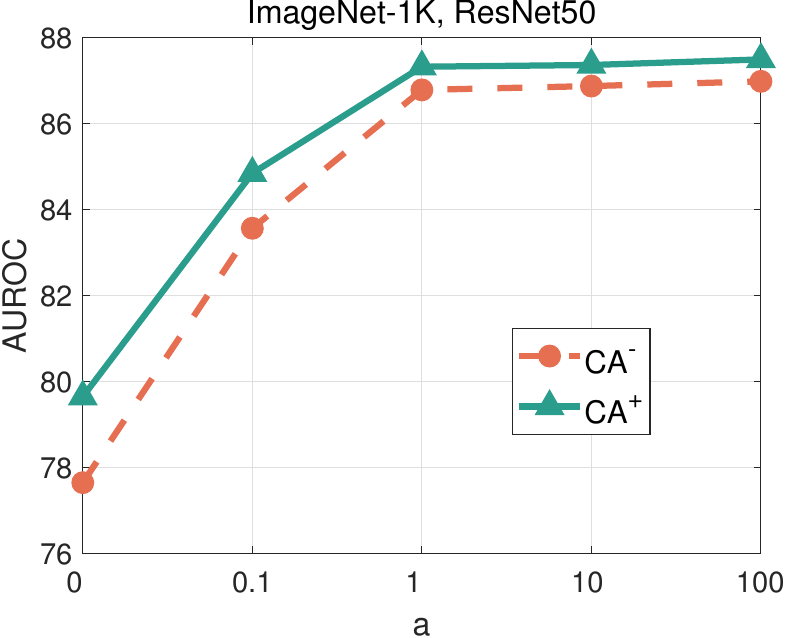}\\
  \caption{Ablation study on the effectiveness of adjusting predicted label distributions. $a = 0$ indicates no adjustment, while $a > 0$ indicates the use of adjusting predicted label distributions.}
  \label{fig:a}
\end{figure}

\begin{figure}
  \centering
  \includegraphics[width=0.65\linewidth]{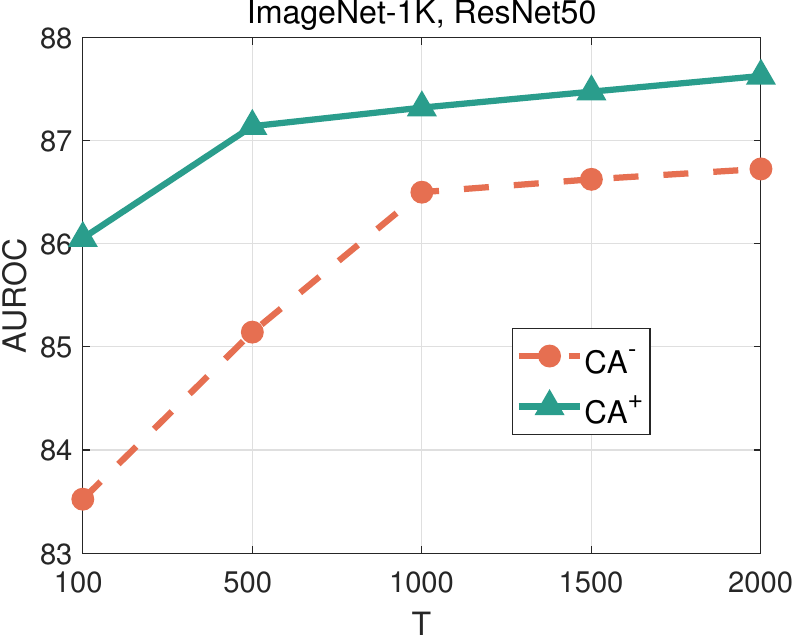}\\
  \caption{OOD detection performance comparison of CA$^-$ and CA$^+$ over varying maximum transition time $T$. The results represent the average AUROC across eight OOD datasets.}
  \label{fig:paraT}
\end{figure}

\begin{figure*}
    \centering
    \begin{subfigure}[b]{0.8\linewidth}
        \centering
        \includegraphics[width=1\linewidth]{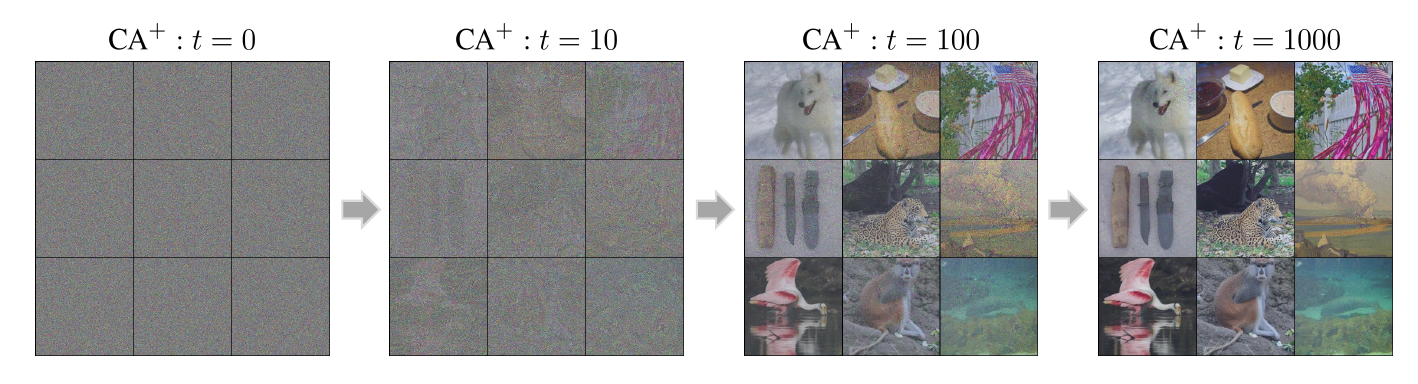}
    \end{subfigure}
    \begin{subfigure}[b]{0.8\linewidth}
        \centering
        \includegraphics[width=1\linewidth]{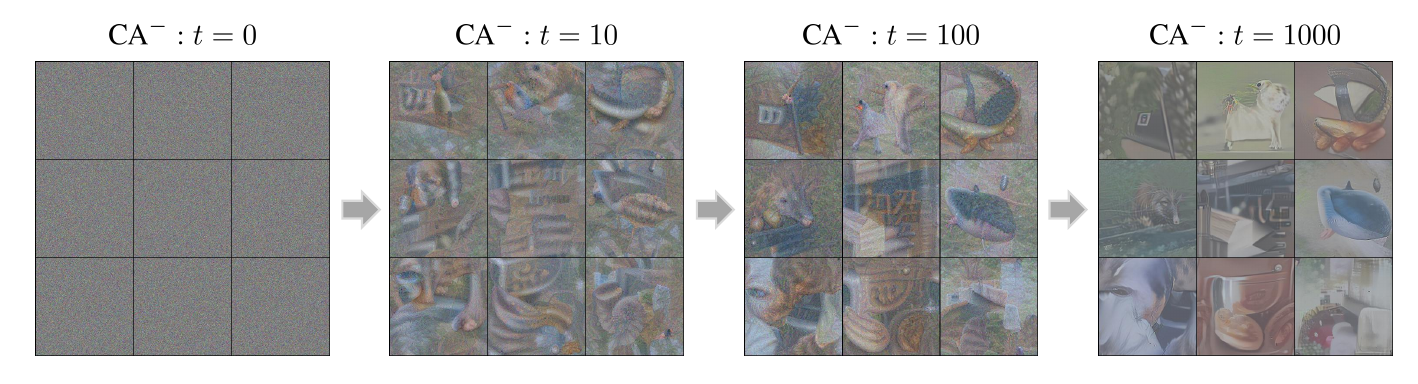}
    \end{subfigure}
    \caption{The image synthesis process from OOD samples to ID samples over time, as generated by CA$^+$ and CA$^-$.}
    \label{fig:progress}
\end{figure*}

\begin{figure*}
    \centering
    \begin{subfigure}[b]{0.4\linewidth}
        \centering
        \includegraphics[width=0.9\linewidth]{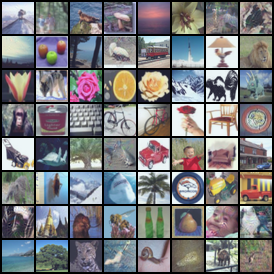}
        \caption{CA$^+$: CIFAR100}
        \label{fig:vis1}
    \end{subfigure}
    \begin{subfigure}[b]{0.4\linewidth}
        \centering
        \includegraphics[width=0.9\linewidth]{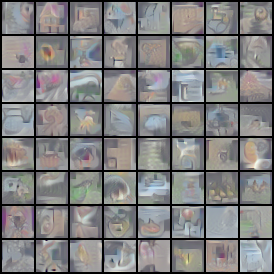}
        \caption{CA$^-$: CIFAR100}
        \label{fig:vis2}
    \end{subfigure}

    \vspace{0.1cm}

    \begin{subfigure}[b]{0.4\linewidth}
        \centering
        \includegraphics[width=0.9\linewidth]{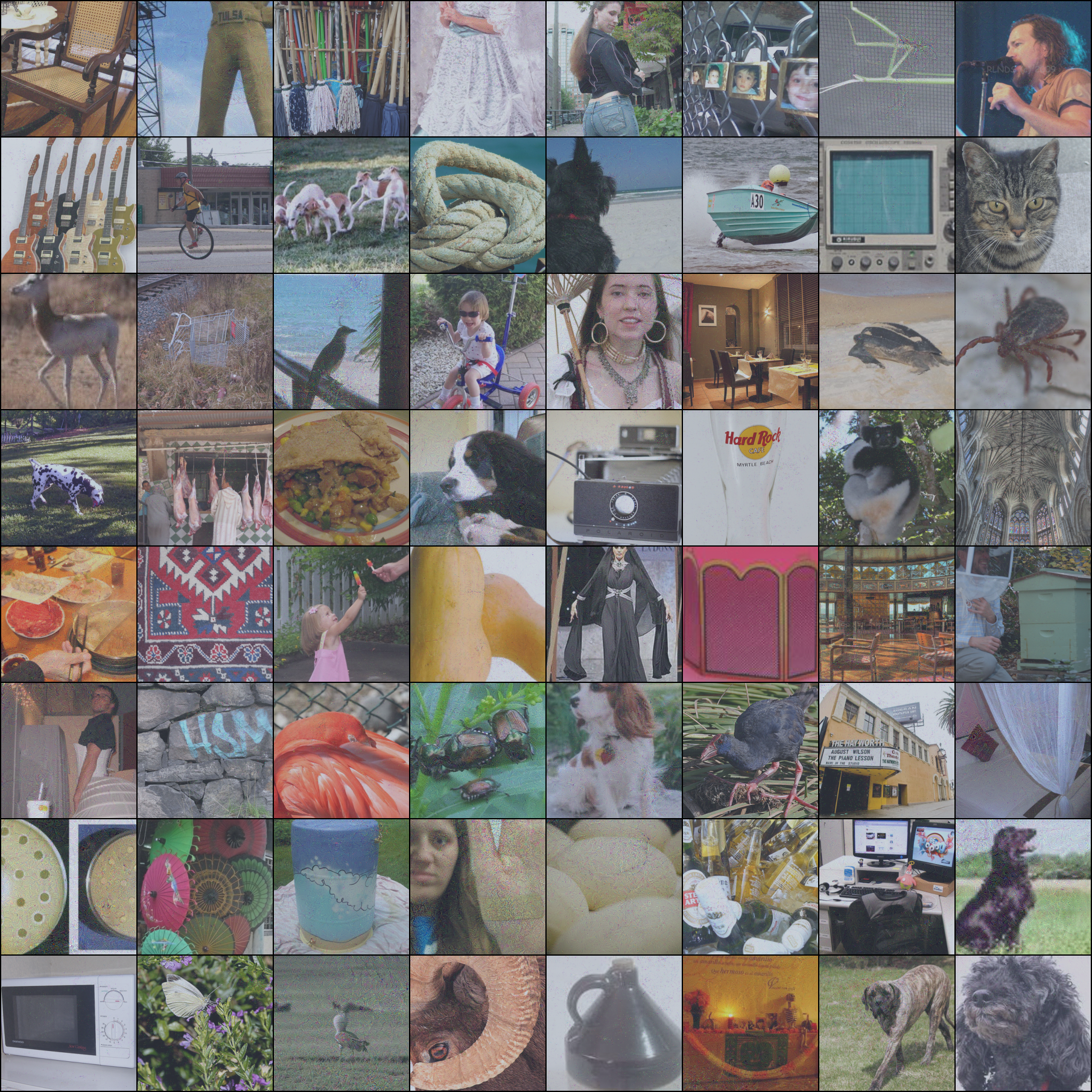}
        \caption{CA$^+$: ImageNet-1K}
        \label{fig:vis3}
    \end{subfigure}
    \begin{subfigure}[b]{0.4\linewidth}
        \centering
        \includegraphics[width=0.9\linewidth]{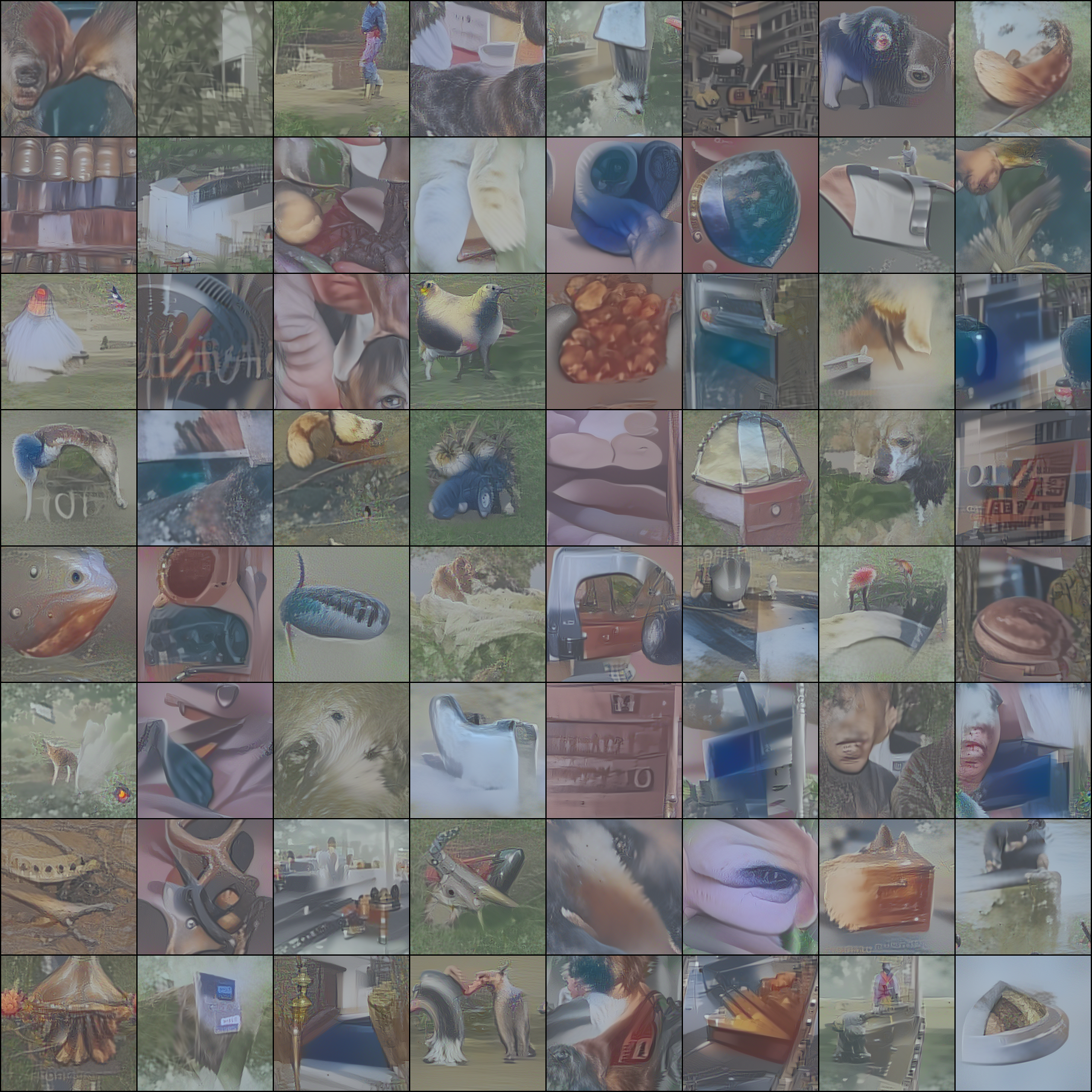}
        \caption{CA$^-$: ImageNet-1K}
        \label{fig:vis4}
    \end{subfigure}
    \caption{Visualization of synthesized images generated by CA$^+$ and CA$^-$ on CIFAR100 and ImageNet-1K.}
    \label{fig:vis}
\end{figure*}

\subsection{Hyperparameter Analysis}
To understand the effect of the maximum transition time $T$ in the sample synthesis phase, we select it from $\{ 100, 500, 1000, 1500, 2000\}$, and the results are presented in \cref{fig:paraT}. CA$^+$ consistently achieves a higher AUROC compared to CA$^-$, suggesting superior performance, and the performance for both methods improves as the maximum transition time increases. This is because, during the sample synthesis phase, OOD samples gradually transform into ID samples. By incorporating real ID samples and increasing the maximum transition time, the synthesized samples at the end are brought closer to the distribution of real ID samples. Thus, the OOD samples evolve towards the ID in a more accurate direction, and this binary classifier can leverage these more accurate samples to learn to differentiate between the two types of samples.

\subsection{Visualization}
In this section, we present visualizations that illustrate the synthesis process and the final generated ID samples for CA$^+$ and CA$^-$. These figures provide insights into how our method gradually transforms OOD samples into ID samples, showing both the time-evolution and final results of the synthesis.

\cref{fig:progress} demonstrates the image synthesis process over time for both CA$^+$ and CA$^-$ on ImageNet-1K. The sequence of images illustrates different stages of the transformation, from initial noise at $t = 0$ to increasingly recognizable features as $t$ progresses, culminating in ID-like samples at $t = 1000$. In the top row, the synthesized images generated by CA$^+$ reveal a clear progression, with specific features becoming more defined over time, resulting in ID samples as $t$ increases. Similarly, in the bottom row, CA$^-$ produces samples that gradually integrate ID characteristics, even without access to training ID data, showcasing the robustness of our method.

\cref{fig:vis} shows the final synthesized ID samples generated by CA$^+$ and CA$^-$ on CIFAR100 and ImageNet-1K. \cref{fig:vis1} and \cref{fig:vis2} display the results for CIFAR100, where CA$^+$ captures more refined details due to its access to training ID data, while CA$^-$ also achieves coherent results. \cref{fig:vis3} and \cref{fig:vis4} present the generated samples for ImageNet-1K, where both CA$^+$ and CA$^-$ produce visually consistent ID images, with CA$^+$ exhibiting slightly sharper details, likely due to the additional ID training information.

\section{Conclusions and Future Work}\label{sec:conclusion}
In this study, we propose a flexible learning framework for OOD knowledge distillation, aimed at enhancing the sensitivity of deep neural networks to OOD samples by training a specialized binary classifier to effectively differentiate between ID and OOD instances. Confidence Amendment (CA) method serves as a specific strategy within this framework, facilitating a structured transition of an OOD sample toward an ID counterpart and emphasizing the incremental establishment of trust in its prediction confidence. These synthesized samples with adjusted predicted label distributions are utilized to train an OOD-sensitive binary classifier. From a theoretical standpoint, the generalization error bound underscores the capability of the classifier in managing unfamiliar ID and OOD samples when paired with a suitable weight function. Comprehensive experiments on various datasets and architectures validate the effectiveness of our method. A promising avenue for future research involves exploring methods that can further transform training ID samples into network-tailored OOD samples to enhance the OOD sensitivity of neural networks.

% if have a single appendix:
%\appendix[Proof of the Zonklar Equations]
% or
%\appendix  % for no appendix heading
% do not use \section anymore after \appendix, only \section*
% is possibly needed

% use appendices with more than one appendix
% then use \section to start each appendix
% you must declare a \section before using any
% \subsection or using \label (\appendices by itself
% starts a section numbered zero.)
%

% \clearpage

% use section* for acknowledgment
\ifCLASSOPTIONcompsoc
  % The Computer Society usually uses the plural form
  \section*{Acknowledgments}
\else
  % regular IEEE prefers the singular form
  \section*{Acknowledgment}
\fi

This work was supported in part by the Research Start-up Fund for Introduced Talents at Sun Yat-Sen University (No. 67000-12255004), and in part by the Fundamental Research Funds for the Central Universities from the Scientific Research Institute of Sun Yat-Sen University (No. 67000-13130003).

%This work was partially supported by the Australian Research Council (ARC) through Linkage Grant LP230201022, Discovery Grant DP240102050, and Linkage Infrastructure, Equipment and Facilities Grant LE240100131.

% Can use something like this to put references on a page
% by themselves when using endfloat and the captionsoff option.
\ifCLASSOPTIONcaptionsoff
  \newpage
\fi

% trigger a \newpage just before the given reference
% number - used to balance the columns on the last page
% adjust value as needed - may need to be readjusted if
% the document is modified later
%\IEEEtriggeratref{8}
% The "triggered" command can be changed if desired:
%\IEEEtriggercmd{\enlargethispage{-5in}}

% references section

% can use a bibliography generated by BibTeX as a .bbl file
% BibTeX documentation can be easily obtained at:
% http://mirror.ctan.org/biblio/bibtex/contrib/doc/
% The IEEEtran BibTeX style support page is at:
% http://www.michaelshell.org/tex/ieeetran/bibtex/
%\bibliographystyle{IEEEtran}
% argument is your BibTeX string definitions and bibliography database(s)
%\bibliography{IEEEabrv,../bib/paper}
%
% <OR> manually copy in the resultant .bbl file
% set second argument of \begin to the number of references
% (used to reserve space for the reference number labels box)

\bibliographystyle{IEEEtran}
\bibliography{ref}

\begin{IEEEbiography}[{\includegraphics[width=1in,height=1.25in,clip,keepaspectratio]{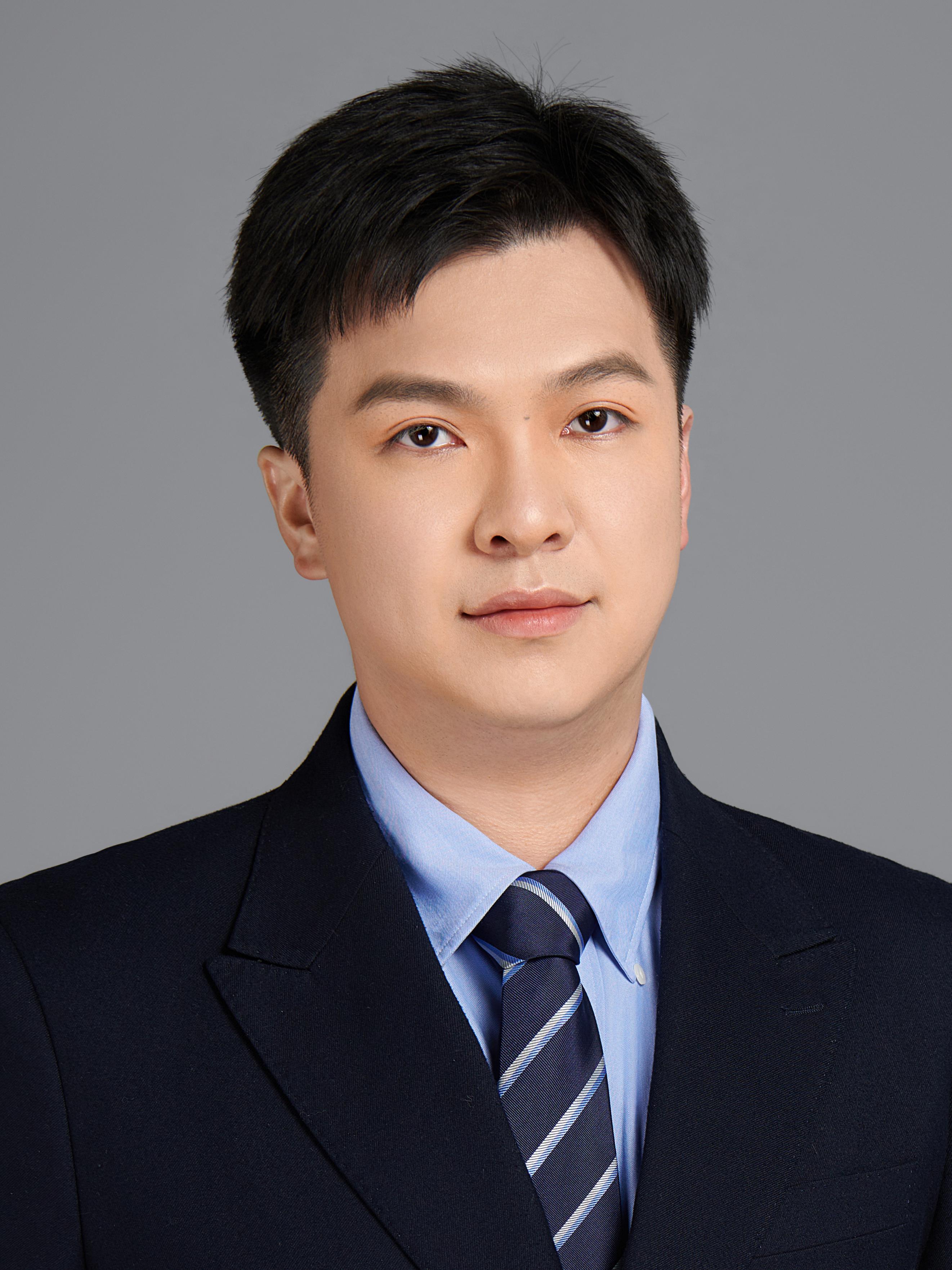}}]{Zhilin Zhao}
received the Ph.D. degree from the University of Technology Sydney in 2022. Prior to that, he received the B.E. and M.E. degrees from the School of Data and Computer Science, Sun Yat-Sen University, China, in 2016 and 2018, respectively. He held postdoctoral research positions at the University of Technology Sydney and Macquarie University, Australia. He is currently an Associate Professor at the School of Computer Science and Engineering, Sun Yat-Sen University. His research interests include generalization analysis, distribution discrepancy estimation, and out-of-distribution detection.
\end{IEEEbiography}

\begin{IEEEbiography}[{\includegraphics[width=1in,height=1.25in,clip,keepaspectratio]{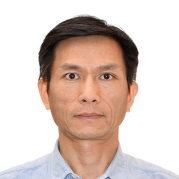}}]{Longbing Cao}(SM'06) received a PhD degree in pattern recognition and intelligent systems at Chinese Academy of Sciences in 2002 and another PhD in computing sciences at University of Technology Sydney in 2005. He is the Distinguished Chair Professor in AI at Macquarie University and an Australian Research Council Future Fellow (professorial level). His research interests include AI and intelligent systems, data science and analytics, machine learning, behavior informatics, and enterprise innovation.
\end{IEEEbiography}

\begin{IEEEbiography}[{\includegraphics[width=1in,height=1.25in,clip,keepaspectratio]{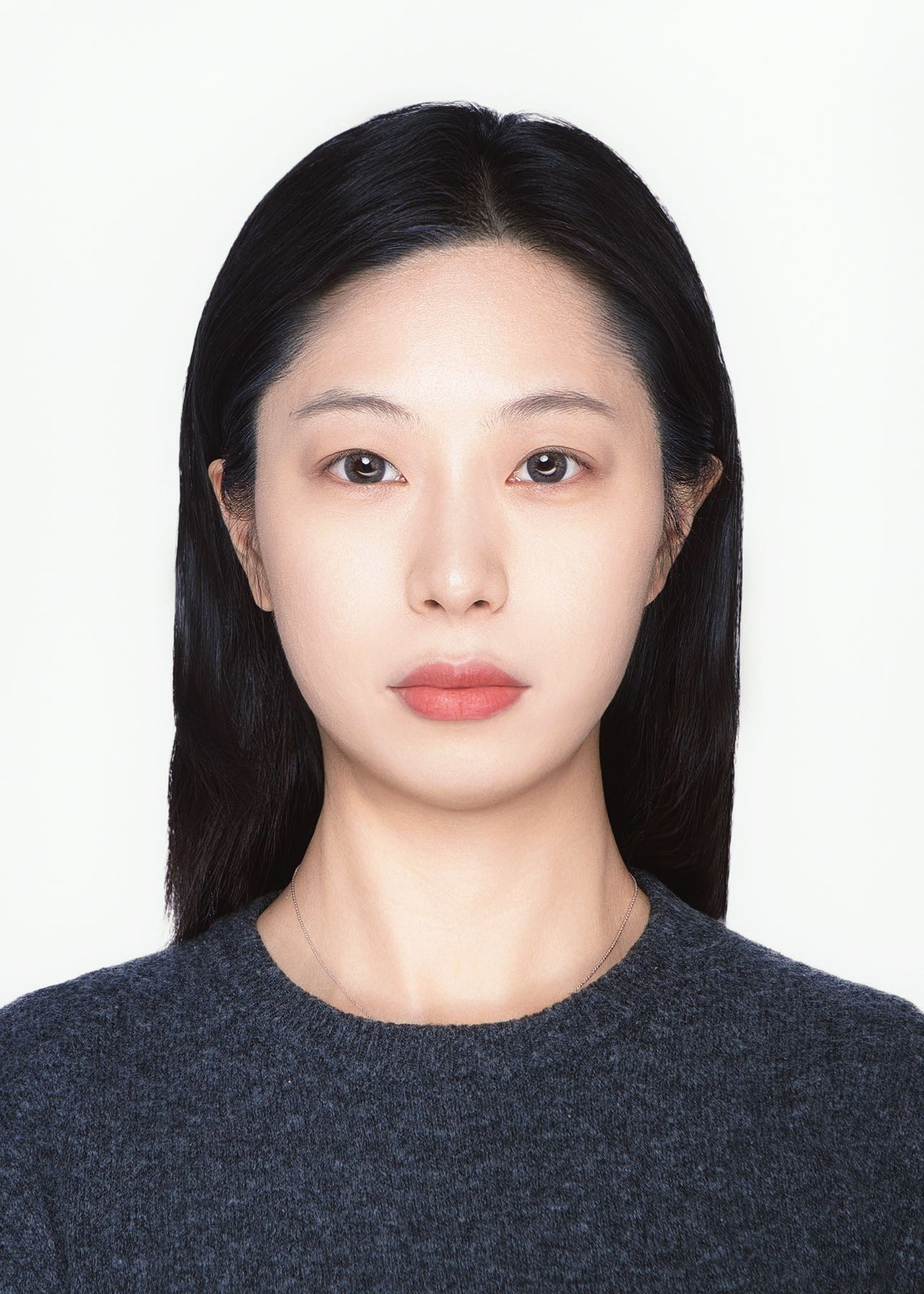}}]{Yixuan Zhang}
received the Ph.D. degree from the University of Technology Sydney in 2023. Prior to that, she received the B.S. from Macquarie University and M.S. degrees from the University of Sydney in 2016 and 2019, respectively. She is currently an assistant professor at Statistics and Data Science, Southeast University, China. Her research interests include data science and machine learning, especially focuses on fairness in machine learning.
\end{IEEEbiography}

\begin{IEEEbiography}[{\includegraphics[width=1in,height=1.25in,clip,keepaspectratio]{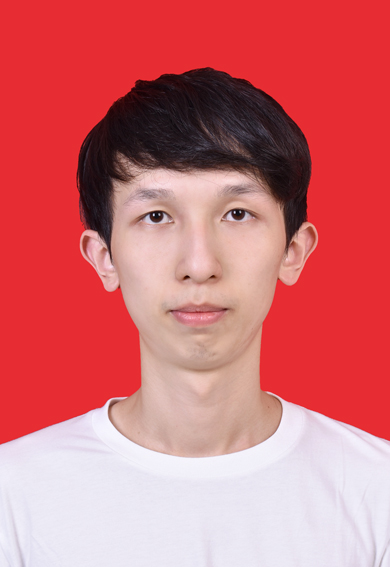}}]
{Kun-Yu Lin} received his B.E., M.E., and Ph.D. degrees from the School of Data and Computer Science, Sun Yat-sen University, China, in 2017, 2019, and 2024, respectively. He is currently a Post-Doctoral Fellow at The University of Hong Kong. His research interests include computer vision and machine learning.
\end{IEEEbiography}

\begin{IEEEbiography}[{\includegraphics[width=1in,height=1.25in,clip,keepaspectratio]{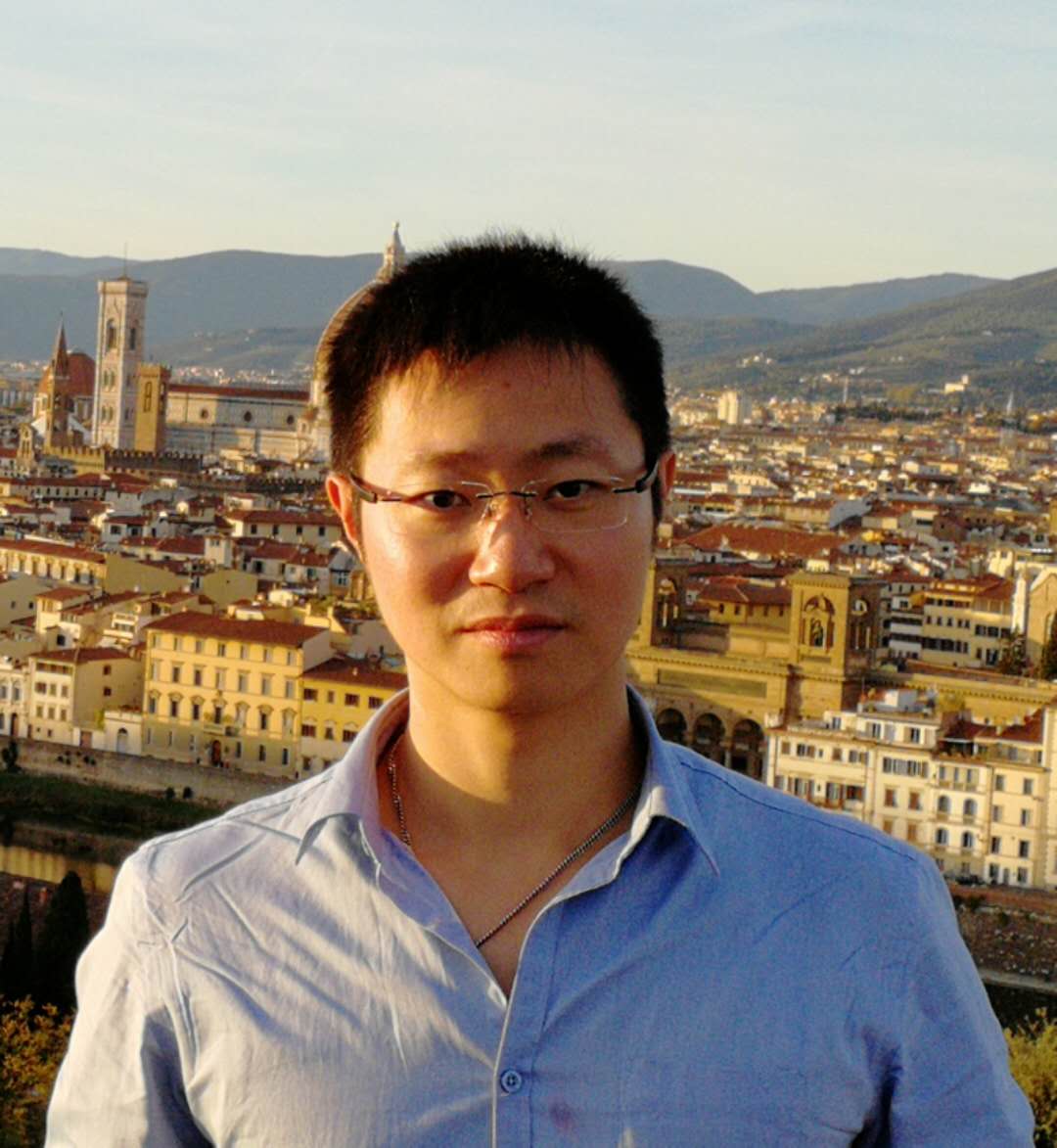}}]
{Wei-Shi Zheng} is now a full Professor with Sun Yat-sen University. His research interests include person/object association and activity understanding, and the related weakly supervised/unsupervised and continuous learning machine learning algorithms. He has now published more than 200 papers, including more than 150 publications in main journals (TPAMI, IJCV, SIGGRAPH, TIP) and top conferences (ICCV, CVPR, ECCV, NeurIPS). He has ever served as area chairs of ICCV, CVPR, ECCV, BMVC, NeurIPS and etc. He is associate editors/on the editorial board of IEEE-TPAMI, Artificial Intelligence Journal, Pattern Recognition. He has ever joined Microsoft Research Asia Young Faculty Visiting Programme. He is a Cheung Kong Scholar Distinguished Professor, a recipient of the Excellent Young Scientists Fund of the National Natural Science Foundation of China, and a recipient of the Royal Society-Newton Advanced Fellowship of the United Kingdom.
\end{IEEEbiography}

% You can push biographies down or up by placing
% a \vfill before or after them. The appropriate
% use of \vfill depends on what kind of text is
% on the last page and whether or not the columns
% are being equalized.

%\vfill

% Can be used to pull up biographies so that the bottom of the last one
% is flush with the other column.
%\enlargethispage{-5in}

% that's all folks
\end{document}